\crefname{section}{Sec.}{Secs.}
\Crefname{section}{Section}{Sections}
\Crefname{table}{Table}{Tables}
\crefname{table}{Tab.}{Tabs.}
\DeclareRobustCommand\onedot{\futurelet\@let@token\@onedot}
\def\@onedot{\ifx\@let@token.\else.\null\fi\xspace}
\def\eg{\emph{e.g.}}
\def\ie{\emph{i.e.}}
\def\etc{\emph{etc.}}
\definecolor{ID}{HTML}{1F77B4}
\definecolor{OOD}{HTML}{FF7F0E}
\newcommand{\mress}[2]{#1\tiny{$\,\pm\,$#2}}
\def\mmethod{ImOOD}
\def\pbal{P^{\text{bal}}}
\def\gbal{g^{\text{bal}}}
\def\eqref#1{equation~\ref{#1}}
\def\1{\bm{1}}
\def\vx{{\bm{x}}}
\DeclareMathAlphabet{\mathsfit}{\encodingdefault}{\sfdefault}{m}{sl}
\SetMathAlphabet{\mathsfit}{bold}{\encodingdefault}{\sfdefault}{bx}{n}
\theoremstyle{plain}
\newtheorem{theorem}{Theorem}[section]
\newtheorem{lemma}[theorem]{Lemma}
\theoremstyle{definition}
\theoremstyle{remark}
\title{Rethinking Out-of-Distribution Detection on Imbalanced Data Distribution}
\author{%
  Kai Liu$^{1,2}$\footnotemark[1]~,\; Zhihang Fu$^{2}$\footnotemark[2]~,\;  Sheng Jin$^{2}$,\;   Chao Chen$^{2}$,\;  Ze Chen$^{2}$,\; \\ \textbf{Rongxin Jiang}$^{1}$\footnotemark[2]~,\;   \textbf{Fan Zhou}$^{1}$,\;  \textbf{Yaowu Chen}$^{1}$,\;  \textbf{Jieping Ye}$^{2}$ \vspace{0.7em} \\
  $^{1}$Zhejiang University, $\;$ $^{2}$Alibaba Cloud \\
  % David S.~Hippocampus\thanks{Use footnote for providing further information
  %   about author (webpage, alternative address)---\emph{not} for acknowledging
  %   funding agencies.} \\
  % Department of Computer Science\\
  % Cranberry-Lemon University\\
  % Pittsburgh, PA 15213 \\
  % \texttt{hippo@cs.cranberry-lemon.edu} \\
  % examples of more authors
  % \And
  % Coauthor \\
  % Affiliation \\
  % Address \\
  % \texttt{email} \\
  % \AND
  % Coauthor \\
  % Affiliation \\
  % Address \\
  % \texttt{email} \\
  % \And
  % Coauthor \\
  % Affiliation \\
  % Address \\
  % \texttt{email} \\
  % \And
  % Coauthor \\
  % Affiliation \\
  % Address \\
  % \texttt{email} \\
}
\begin{document}

\maketitle

\renewcommand{\thefootnote}{\fnsymbol{footnote}}
\footnotetext[1]{Work done during Kai Liu's research internship at Alibaba Cloud. Email: kail@zju.edu.cn.}
\footnotetext[2]{Corresponding authors. Email: rongxinj@zju.edu.cn, zhihang.fzh@alibaba-inc.com.}
\renewcommand{\thefootnote}{\arabic{footnote}}

\begin{abstract}
Detecting and rejecting unknown out-of-distribution (OOD) samples is critical for deployed neural networks to void unreliable predictions.
In real-world scenarios, however, the efficacy of existing OOD detection methods is often impeded by the inherent imbalance of in-distribution (ID) data, which causes significant performance decline.
Through statistical observations, we have identified two common challenges faced by different OOD detectors: misidentifying tail class ID samples as OOD, while erroneously predicting OOD samples as head class from ID.
To explain this phenomenon, we introduce a generalized statistical framework, termed \mmethod, to formulate the OOD detection problem on imbalanced data distribution.
Consequently, the theoretical analysis reveals that there exists a class-aware \textit{bias} item between balanced and imbalanced OOD detection, which contributes to the performance gap.
Building upon this finding, we present a unified training-time regularization technique to mitigate the bias and boost imbalanced OOD detectors across architecture designs.
Our theoretically grounded method translates into consistent improvements on the representative CIFAR10-LT, CIFAR100-LT, and ImageNet-LT benchmarks against several state-of-the-art OOD detection approaches.
Code is available at \url{https://github.com/alibaba/imood}.
\end{abstract}
\section{Introduction}
\label{sec:intro}

Identifying and rejecting unknown samples during models' deployments, aka OOD detection, has garnered significant attention and witnessed promising advancements in recent years~\citep{yang2021generalized,bitterwolf2022breaking,ming2022delving,tao2023non,liu2023category}.
Nevertheless, most advanced OOD detection methods are designed and evaluated in ideal settings with category-balanced in-distribution (ID) data. 
However, in practical scenarios, long-tailed class distribution (a typical imbalance problem) not only limits classifiers' capability~\citep{buda2018systematic}, but also causes a substantial performance decline for OOD detectors~\citep{wang2022partial}.

As \citet{wang2022partial} reveal, a naive combination of long-tailed image cognition~\citep{menon2021long} and general OOD detection~\citep{hendrycks2019deep} techniques cannot simply mitigate this issue, and several efforts have been applied to study the \textit{joint} imbalanced OOD detection problem~\citep{wang2022partial,huang2023harnessing,sapkota2023adaptive}. 
They mainly attribute the performance degradation to misidentifying samples from tail classes as OOD (due to the lack of training data), and concentrate on improving the discriminability for tail classes and out-of-distribution samples~\citep{wang2022partial,wei2024eat}. 
Whereas, we argue that the confusion between tail class and OOD samples presents only one aspect of the imbalance problem arising from the long-tailed data distribution.

To comprehensively understand the imbalance issue, we investigate a wide range of representative OOD detection methods (\ie, OE~\citep{hendrycks2019deep}, Energy~\citep{liu2020energy}, and PASCL~\citep{wang2022partial}) on the CIFAR10-LT dataset~\citep{cao2019learning}. 
For each model, we statistic the distribution of wrongly detected ID samples and wrongly detected OOD samples, respectively.
The results in \cref{fig:intro_stat} reveal that different approaches encounter the same two challenges: (1) ID samples from tail classes are prone to be detected as OOD, and (2) OOD samples are prone to be predicted as ID from head classes.
As illustrated in \cref{fig:intro_feat}, we argue that the disparate ID decision spaces on head and tail classes \textit{jointly} result in the performance decline for OOD detection, which has also been confirmed by \citet{miao2023out}.

\begin{figure}[tb]
  \centering
  \begin{subfigure}{0.65\linewidth}
    \includegraphics[width=\linewidth]{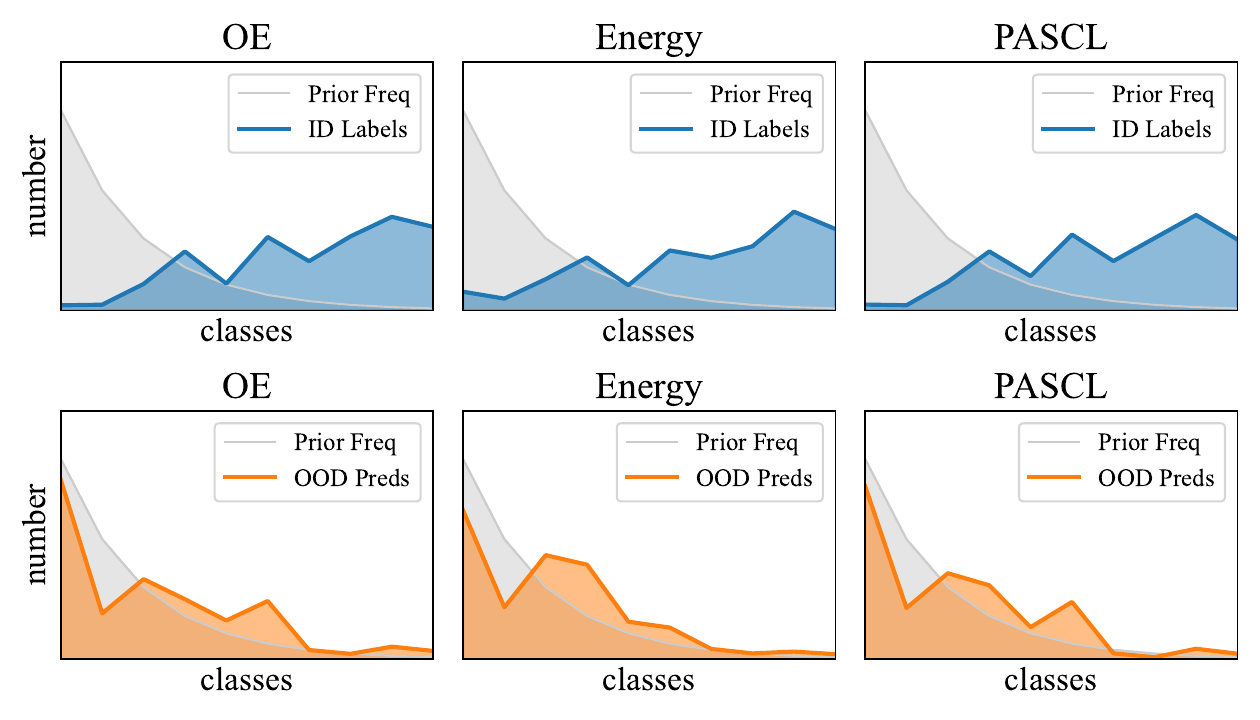}
    \caption{Statistics on \textit{wrongly-detected} ID and OOD samples. }
    \label{fig:intro_stat}
  \end{subfigure}
  \hfill
  \begin{subfigure}{0.3\linewidth}
    \includegraphics[width=\linewidth]{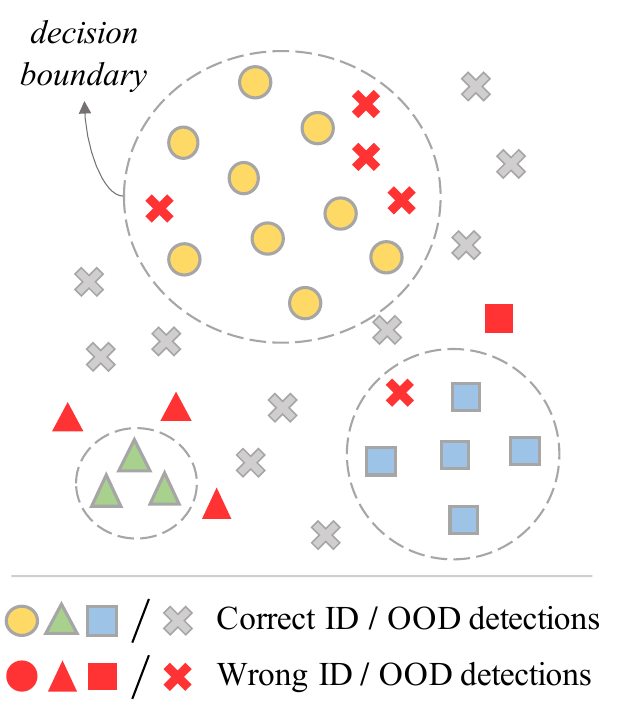}
    \caption{Dicision illustration. }
    \label{fig:intro_feat}
  \end{subfigure}
  \caption{\textbf{Issues of OOD detection on imbalanced data}. (a) Statistics of the \textcolor{ID}{class labels} of ID samples that are \textit{wrongly} detected as OOD, and the \textcolor{OOD}{class predictions} of OOD samples that are \textit{wrongly} detected as ID. (b) Illustration of the OOD detection process in feature space. Head classes' huge decision space and tail classes' small decision space \textit{jointly} damage the OOD detection.}
  \label{fig:intro_moti}
\end{figure}

To mitigate this problem, \citet{miao2023out} developed a heuristic outlier class learning approach (namely COCL) to respectively separate OOD samples from head and tail ID classes in the feature space.
Different from COCL, this paper introduces a generalized statistical framework, termed ImOOD, to formulate and explain the fundamental issue of imbalanced OOD detection from a probabilistic perspective.
We start by extending closed-set ID classification to open-set scenarios and derive a unified posterior probabilistic model for ID/OOD identification.
Consequently, we find that between balanced and imbalanced ID data distributions exists a class-aware \textit{bias} item, which concurrently explains the inferior OOD detection performance on both head and tail classes.

Based on \mmethod, we derive a unified loss function to regularize the posterior ID/OOD probability during training, which simultaneously encourages the separability between tail ID classes and OOD samples, and prevents predicting OOD samples as head ID classes.
Furthermore, \mmethod~can readily generalize to various OOD detection methods, including OE~\citep{hendrycks2019deep}, Energy~\citep{liu2020energy}, and BinDisc~\citep{bitterwolf2022breaking}, Mahalanobis-distance~\citep{lee2018simple}, \etc~Besides, our method can easily integrate with other feature-level optimization techniques, including PASCL~\cite{wang2022partial} and COCL~\cite{miao2023out}, to derive stronger OOD detectors.
With the support of theoretical analysis, our statistical framework consistently translates into strong empirical performance on the CIFAR10-LT, CIFAR100-LT~\citep{cao2019learning}, and ImageNet-LT~\citep{wang2022partial} benchmarks.

Our contribution can be summarized as follows:

\begin{itemize}[partopsep=2pt,itemsep=3pt,topsep=0pt,parsep=0pt,leftmargin=20pt]
    \item Through statistical observation and theoretical analysis, we reveal that OOD detection approaches collectively suffer from the disparate decision spaces between tail and head classes in the imbalanced data distribution.
    \item We establish a generalized statistical framework to formulate and explain the imbalanced OOD detection issue, and further provide a unified training regularization to alleviate the problem.
    \item We achieve superior OOD detection performance on three representative benchmarks, outperforming state-of-the-art methods by a large margin.
\end{itemize}

\section{Related Work}
\label{sec:rel}

\textbf{Out-of-distribution detection.}
To reduce the overconfidence on unseen OOD samples~\citep{bendale2015towards}, a surge of post-hoc scoring functions has been devised based on various information, including output confidence~\citep{hendrycks2017baseline,liang2018enhancing,liu2023gen}, free energy~\citep{liu2020energy,du2022vos,lafon2023hybrid}, Bayesian inference~\citep{maddox2019simple,cao2022deep}, gradient information~\citep{huang2021importance}, model/data sparsity~\citep{sun2021react,zhu2022boosting,djurisic2023extremely,ahn2023line}, and visual distance~\citep{sun2022out,tao2023npos}, \etc~Vision-language models like CLIP~\citep{radford2021clip} have been recently leveraged to explicitly collect potential OOD labels~\citep{fort2021exploring,esmaeilpour2022zero} or conduct zero-shot OOD detections~\citep{ming2022delving}.
Other researchers add open-set regularization in the training time~\citep{malinin2018predictive,hendrycks2019deep,wei2022mitigating,wang2023out,lu2023uncertainty,liu2023category}, making models produce lower confidence or higher energy on OOD data. Manually-collected~\citep{hendrycks2019deep,wang2023doe} or synthesized~\citep{du2022vos,tao2023npos} outliers are required for auxiliary constraints.

Works insofar have mostly focused on the ideal setting with balanced data distribution for optimization and evaluation. This paper aims at OOD detection on practically imbalanced data distribution.

\textbf{OOD detection on imbalanced data distribution.}
In real-world scenarios, the deployed data frequently exhibits long-tailed distribution, and \citet{liu2019large} start to study the open-set classification on class-imbalanced setup. 
\citet{wang2022partial} systematically investigate the performance degradation for OOD detection on imbalanced data distribution, and develop a partial and asymmetric contrastive learning (PASCL) technique to tackle this problem. 
Consequently, \citet{wei2024eat} and \citep{miao2023out} extend the feature-space optimization by introducing abstention classes or outlier class learning and integrating with data augmentation or margin learning techniques, respectively. 
\citet{sapkota2023adaptive} employ adaptive distributively robust optimization (DRO) to quantify the sample uncertainty from imbalanced distributions.
\citet{choi2023balanced} focus on the imbalance problem in OOD data, and develop an adaptive regularization for each OOD sample during optimization.
In particular, \citet{jiang2023detecting} also utilize class prior to boost imbalanced OOD detector, which is however constrained as a heuristic post-hoc normalization for pre-trained models and unable to translate into training a better detector.
% We discuss the detailed difference between our \mmethod~and \citet{jiang2023detecting} in \cref{app:exp_cls_prior}, demonstrating our novelty regarding the theoretical and practical perspectives.

Different from previous efforts, this paper establishes a generalized probabilistic framework to formulate and explain the imbalanced OOD detection issue, and provides a unified training-time regularization technique to alleviate this problem across different OOD detectors.

\section{Rethinking Imbalanced OOD Detection}
\label{sec:method}

In this section, we start from revisiting the closed-set imbalanced image recognition (in-distribution classification), and extend to open-set out-of-distribution detection problem.
Finally, we will reveal the class-aware bias between balanced and imbalanced OOD detectors, and derive a unified training-time regularization technique to alleviate the bias for different detectors.

\subsection{Preliminaries}
\label{sec:prel}

\textbf{Imbalanced Image Recognition}. Let $\mathcal{X}^{in}$ and $\mathcal{Y}^{in} = \{1,2,\cdots,K\}$ denote the ID feature space and label space with $K$ categories in total. Let $\vx \in \mathcal{X}^{in}$ and $y \in \mathcal{Y}^{in}$ be the random variables with respect to $\mathcal{X}^{in}$ and $\mathcal{Y}^{in}$. The posterior probability for predicting sample $\vx$ into class $y$ is given by:

\begin{equation}
\begin{aligned}
  P(y|\vx) = \frac{P(\vx|y) \cdot P(y)}{P(\vx)} \propto P(\vx|y) \cdot P(y)
  \label{eq:bayes}
\end{aligned}
\end{equation}

Given a learned classifier $f\colon \mathcal{X}^{in} \mapsto \mathbb{R}^K$ that estimates $P(y|\vx)$, in the class-imbalance setting where the label prior $P(y)$ is highly skewed, $f$ is evaluated with the balanced error (BER)~\citep{brodersen2010balanced,menon2013statistical,menon2021long}:

\begin{equation}
\begin{aligned}
  \text{BER}(f)=\frac{1}{K}\sum_{y}P_{\vx|y}(y \neq \text{argmax}_{y'}f_{y'}(\vx))
  \label{eq:ber}
\end{aligned}
\end{equation}

This can be seen as implicitly estimating a class-balanced posterior probability~\citep{menon2021long}:

\begin{equation}
\begin{aligned}
  \pbal(y|\vx) \propto \frac{1}{K} \cdot P(\vx|y) \propto \frac{P(y|\vx)}{P(y)}
  \label{eq:cls_bal}
\end{aligned}
\end{equation}

The ideal Bayesian-optimal classification becomes $y^* = \text{argmax}_{y \in [K]}\pbal(y|\vx)$.

\textbf{Out-of-distribution Detection}. In the open world, input sample $x$ may also come from the OOD space $\mathcal{X}^{out}$.
Let $o$ be the random variable for an unknown label $o \notin \mathcal{Y}^{in}$, and $i$ be the union variable of all ID class labels (\ie, $i = \cup~y$). 
% To ensure OOD detection is learnable~\citep{fang2022out}, we make a mild assumption that the ID and OOD data space is separate (\ie, $\mathcal{X}^{in} \cap \mathcal{X}^{out} = \emptyset$) and finite (\ie, $|\mathcal{X}^{in}|,|\mathcal{X}^{out}| < \infty$ and $P(i),P(o) < 1$). 
% \textcolor{red}{TODO: ID/OOD data samples, rather than space flow.}
Given an input $\vx$ from the union space $\mathcal{X}^{in} \cup \mathcal{X}^{out} \triangleq \mathcal{X}$, the posterior probability for identifying $\vx$ as in-distribution is formulated as:

\begin{equation}
\begin{aligned}
  P(i|\vx) = \sum_y{P(y|\vx)} = 1 - P(o|\vx) \not\equiv 1
  \label{eq:ood_posterior}
\end{aligned}
\end{equation}

Correspondingly, $P(o|\vx)$ measures the probability that sample $\vx$ does not belong to any known ID class, aka OOD probability.
Hence, the OOD detection task can be viewed as a binary classification problem~\citep{bitterwolf2022breaking}.
Given a learned OOD detector $g\colon \mathcal{X}^{in} \cup \mathcal{X}^{out} \mapsto \mathbb{R}^1$ that estimate the $P(i|\vx)$, samples with lower scores $g(\vx)$ are detected as OOD and vice versa.

\subsection{Analysis of OOD Detection on Imbalanced Data Distribution}
\label{sec:method_main}

When ID classification meets OOD detection, slightly different from \cref{eq:bayes}, the classifier $f$ is actually estimating the posterior class probability for sample $\vx$ from ID space $\mathcal{X}^{in}$ merely, that is, $P(y|\vx,i)$~\citep{hendrycks2019deep,wang2022partial}.
Considering a sample $\vx$ from the open space $\vx \in \mathcal{X}^{in} \cup \mathcal{X}^{out}$, 
the classification posterior in \cref{eq:bayes} is re-formulated as $P(y|\vx) = P(y|\vx,i) \cdot P(i|\vx)$, the probability that sample $\vx$ comes from ID data multiply the probability that sample $\vx$ belongs to the specific $y$-th ID class.
According to \cref{eq:cls_bal}, we assume the proportion between balanced classification $\pbal(y|\vx)$ and vanilla $P(y|\vx)$ 
for each class $y$ still holds.

\begin{lemma}
\label{ass:propto}
For each ID class $y$ in open-set, there exists a non-negative variable $\gamma_y(\vx)$, so that $\pbal(y|\vx) = \gamma_y(\vx) \cdot \frac{P(y|\vx)}{P(y)}$, where $\gamma_y(\vx) = \frac{1}{K} \frac{\pbal(\vx|y)}{P(\vx|y)} \in (0, \infty)$, $P(y|\vx), \pbal(y|\vx), P(y) \in [0, 1]$.
\end{lemma}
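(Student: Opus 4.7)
The plan is to verify the claimed form of $\gamma_y(\vx)$ by direct computation, applying Bayes' rule to both $\pbal(y|\vx)$ and $P(y|\vx)$ and then taking their ratio. First I would extend the convention implicit in \cref{eq:cls_bal} to the open-set: $\pbal$ is the distribution obtained by replacing the (skewed) label prior with a uniform one, so that $\pbal(y) = 1/K$ for each $y \in \mathcal{Y}^{in}$, while the class-conditionals $\pbal(\vx|y)$ are the objects to be compared to $P(\vx|y)$. This is exactly the balanced posterior that gives rise to the Bayes-optimal rule under the BER criterion in \cref{eq:ber}.

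Next I would carry out the two Bayes expansions. From $\pbal$ I obtain
\begin{equation*}
\pbal(y|\vx) \;=\; \frac{\pbal(\vx|y)\,\pbal(y)}{\pbal(\vx)} \;=\; \frac{1}{K}\cdot\frac{\pbal(\vx|y)}{\pbal(\vx)},
\end{equation*}
and from $P$ I obtain $P(y|\vx)/P(y) = P(\vx|y)/P(\vx)$. Dividing the first by the second gives
\begin{equation*}
\frac{\pbal(y|\vx)}{P(y|\vx)/P(y)} \;=\; \frac{1}{K}\cdot\frac{\pbal(\vx|y)}{P(\vx|y)}\cdot\frac{P(\vx)}{\pbal(\vx)}.
\end{equation*}
Under the standard rebalancing assumption that only the label prior is reweighted while the input marginal $P(\vx)$ is preserved (i.e.\ $\pbal(\vx) = P(\vx)$, which is the natural open-set analogue of the proportionality in \cref{eq:cls_bal}), the last ratio disappears and I recover $\gamma_y(\vx) = \tfrac{1}{K}\,\pbal(\vx|y)/P(\vx|y)$, as claimed. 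Positivity $\gamma_y(\vx)\in(0,\infty)$ is then immediate from the strict positivity of the class-conditional densities on their support, and the stated range for $P(y|\vx)$, $\pbal(y|\vx)$, $P(y)$ is just the definition of a probability.

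The main obstacle I expect is conceptual rather than computational: one must justify the identification $\pbal(\vx) = P(\vx)$ in the open-set regime, where the marginal formally involves both $\mathcal{X}^{in}$ and $\mathcal{X}^{out}$. The cleanest route is to note that rebalancing is a reweighting of the ID label variable $y$ alone and does not touch the generative process $\vx \mid y$ or the OOD component, so the induced change in $\pbal(\vx)$ is absorbed by the normalisation and cancels in the ratio above. Alternatively, the lemma may be read as an existence statement: one can simply \emph{define} $\gamma_y(\vx) := \pbal(y|\vx)\cdot P(y)/P(y|\vx)$ and then show, via the two Bayes expansions, that this $\gamma_y(\vx)$ coincides with the stated expression whenever the marginals agree. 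Either way the argument is a few lines of algebra once the definitional conventions for $\pbal$ in the open-set are fixed.
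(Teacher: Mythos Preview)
Your argument is correct and matches the paper's own proof essentially line for line: apply Bayes' rule to both $\pbal(y|\vx)$ and $P(y|\vx)$, invoke $\pbal(\vx)=P(\vx)$, and read off $\gamma_y(\vx)=\tfrac{1}{K}\,\pbal(\vx|y)/P(\vx|y)$. Your additional discussion justifying $\pbal(\vx)=P(\vx)$ in the open-set regime is more careful than the paper (which simply asserts it), but the core derivation is identical.
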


In fact, $\gamma_y(\vx)$ captures the likelihood difference for the same sample $\vx$ between balanced and imbalanced distributions, and also plays the role in constraining the multiplication results to $(0, 1)$.
The proof can be found in \cref{app:sec:proof_prop}.
Using \cref{ass:propto}, we reveal that there exists a class-aware bias term $\beta(\vx)$ between the OOD detection on balanced and imbalanced data distribution.

\begin{theorem}
\label{thm:bias}
According to \cref{ass:propto}, there exists a bias term $\beta(\vx) = \sum_y{\gamma_y(\vx) \frac{P(y|\vx, i)}{P(y)}}$ between $\pbal(i|\vx)$ and $P(i|\vx)$, \ie, $\pbal(i|\vx) = \beta(\vx) \cdot P(i|\vx)$.
\end{theorem}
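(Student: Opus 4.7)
The plan is to combine two standard decompositions already available in the excerpt --- the marginalization $\pbal(i|\vx)=\sum_y \pbal(y|\vx)$ (the balanced-distribution analog of \cref{eq:ood_posterior}) and the open-set factorization $P(y|\vx)=P(y|\vx,i)\cdot P(i|\vx)$ introduced just before \cref{ass:propto} --- and then use \cref{ass:propto} as the bridge that converts each balanced per-class posterior into the corresponding imbalanced quantities.

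First, I would start from $\pbal(i|\vx)=\sum_y \pbal(y|\vx)$, which mirrors the derivation yielding \cref{eq:ood_posterior} but applied under the balanced distribution on the ID label set $\{1,\dots,K\}$. Next, I would substitute $\pbal(y|\vx)=\gamma_y(\vx)\cdot P(y|\vx)/P(y)$ from \cref{ass:propto} termwise, turning the right-hand side into a sum involving the imbalanced class posteriors $P(y|\vx)$ and the class priors $P(y)$.

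Second, I would replace $P(y|\vx)$ by the open-set factorization $P(y|\vx,i)\cdot P(i|\vx)$ and pull the scalar $P(i|\vx)$ out of the sum, since it does not depend on $y$. What remains inside the sum is then exactly the defining expression $\sum_y \gamma_y(\vx)\cdot P(y|\vx,i)/P(y)=\beta(\vx)$, so the identity $\pbal(i|\vx)=\beta(\vx)\cdot P(i|\vx)$ follows immediately.

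The argument is a short chain of direct substitutions and presents no genuine obstacle. The only point deserving a moment's care is the justification of $\pbal(i|\vx)=\sum_y \pbal(y|\vx)$; it follows from the same marginalization that produces \cref{eq:ood_posterior}, because the balanced distribution is still supported on the $K$ known ID labels. It is also worth remarking in passing that $\beta(\vx)$ is a well-defined, strictly positive scalar whenever the ID priors satisfy $P(y)>0$, since \cref{ass:propto} guarantees $\gamma_y(\vx)\in(0,\infty)$; this makes the eventual rearrangement $P(i|\vx)=\pbal(i|\vx)/\beta(\vx)$ meaningful for the training-time regularization developed in the sequel.
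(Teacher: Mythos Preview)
Your proposal is correct and matches the paper's own proof essentially step for step: the paper likewise combines the factorization $P(y|\vx)=P(y|\vx,i)\cdot P(i|\vx)$ with \cref{ass:propto} to write $\pbal(y|\vx)=\gamma_y(\vx)\tfrac{P(y|\vx,i)}{P(y)}P(i|\vx)$, then sums over $y$ via the balanced analog of \cref{eq:ood_posterior} and factors out $P(i|\vx)$. Your additional remarks on the justification of the balanced marginalization and the strict positivity of $\beta(\vx)$ are fine glosses but not required.
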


\begin{proof} 
Since $P(y|\vx) = P(y|\vx,i) \cdot P(i|\vx)$, from \cref{ass:propto}, $\pbal(y|\vx)$ can be further expressed as $\pbal(y|\vx) = \gamma_y(\vx) \cdot \frac{P(y|\vx, i)}{P(y)} \cdot P(i|\vx)$. 
According to \cref{eq:ood_posterior}, it can be formulated as: $\pbal(i|\vx) = \sum_y{\pbal(y|\vx)} = \sum_y{\gamma_y(\vx) \frac{P(y|\vx, i)}{P(y)} P(i|\vx)} \triangleq \beta(\vx) \cdot P(i|\vx)$, where $\beta(\vx) = \sum_y{\gamma_y(\vx) \frac{P(y|\vx, i)}{P(y)}}$.
\end{proof}

Based on \cref{thm:bias}, we conclude that the original OOD posterior $P(i|\vx)$ estimated by the detector $g(\vx)$, with the bias term $\beta(\vx)$, causes the performance gap for OOD detection on balanced and imbalanced data distributions. To understand this further, we will first discuss the scenario under ideal conditions and then extend our analysis to real-world scenarios, attempting to analyze the intrinsic bias of the out-of-distribution problem in both cases.

\textbf{On ideal class-balanced distribution}, the data likelihood $P(\vx|y) = \pbal(\vx|y)$, so that $\gamma_y(\vx) \equiv \frac{1}{K}$.
From \cref{thm:bias}, $\beta(\vx) = \sum_y{\frac{1}{K} \frac{P(y|\vx, i)}{P(y)}}$.
Meanwhile, the class prior $P(y) \equiv \frac{1}{K}$, and the summary of in-distribution classification probabilities equals 1 (\ie, $\sum_y{P(y|\vx,i)} = 1$), making the bias item $\beta(\vx) = \sum_y{P(y|\vx,i)} = 1$. 
Ultimately, \cref{thm:bias} indicates $\pbal(i|\vx) = P(i|\vx)$, where the detector $g(\vx)$ exactly models the balanced OOD detection.

\textbf{On more challenging class-imbalanced distribution}, the class prior $P(y)$ is a class-specific variable for ID categories, and $\beta(x) = \sum_y{\gamma_y(\vx)\frac{P(y|\vx,i)}{P(y)}} \not\equiv 1$.
In previous works\citep{hendrycks2019deep,wang2022partial}, the class posterior $P(y|\vx,i)$ is usually estimated with a softmax function by classifier $f$, and the class prior $P(y)$ adopts the sample frequency for each ID class. 
$\gamma_y(\vx)$ is under-explored and simply treated as a constant.
Under this circumstance, since $\sum_y{P(y|\vx,i)} = 1$, the bias item can be viewed as a weighted sum of the reciprocal prior $\frac{1}{P(y)}$.
\cref{thm:bias} explains how $\beta(x)$ causes the gap between balanced (ideal) ID/OOD probability $\pbal(i|\vx)$ and imbalanced (learned) $P(i|\vx)$:

\begin{itemize}[partopsep=2pt,itemsep=2pt,topsep=0pt,parsep=0pt,leftmargin=20pt]
    \item Given a sample $\vx$ from an ID tail-class $y_t$ with a small prior $P(y_t)$, when the classification probability $P(y_t|\vx,i)$ gets higher, the term $\beta(\vx)$ becomes larger. Compared to the original $P(i|\vx)$ (learned by $g$), the calibrated probability $\pbal(i|\vx)$ \ie, $P(i|\vx) \cdot \beta(\vx)$ is more likely to identify the sample $x$ as in-distribution, rather than OOD.
    \item Given a sample $\vx^\prime$ from OOD data, as the classifier $f$ tends to produce a higher head-class probability $P(y_h|\vx^\prime,i)$ and a lower tail-class $P(y_t|\vx^\prime,i)$~\citep{jiang2023detecting}, the term $\beta(\vx^\prime)$ becomes smaller. Compared to the original $P(i|\vx^\prime)$, the calibrated probability $\pbal(i|\vx^\prime)$ \ie, $P(i|\vx^\prime) \cdot \beta(\vx^\prime)$ is more likely to identify the sample $x^\prime$ as out-of-distribution, rather than ID.
\end{itemize}

The above analysis is consistent with the statistical behaviors (see \cref{fig:intro_moti}) of a vanilla OOD detector $g$. Compared to an ideal balanced detector $\gbal$, $g$ is prone to wrongly detect ID samples from tail class as OOD, and simultaneously wrongly detect OOD samples as head class from ID.

\subsection{Towards Balanced OOD Detector Learning}
\label{sec:method_train}

To address the identified bias in OOD detection due to class imbalance, we present a unified approach from a statistical perspective. Our goal is to push the learned detector $g$ towards the balanced $\gbal$. We outline the overall formula for estimating the OOD posterior below.

Specially, we use the common practices~\cite{menon2021long,jiang2023detecting} to estimate the probability distribution $\pbal(i|\vx) = \sum_y{\pbal(y|\vx)} = \sum_y{\gamma_y(\vx) \frac{P(y|\vx, i)}{P(y)} P(i|\vx)}$ in \cref{thm:bias}:

First, for the class prior $P(y)$, we use the label frequency of the training dataset~\citep{cao2019learning,menon2021long}, expressed as $P(y) \coloneqq \frac{n_y}{\sum_{y^\prime}n_{y^\prime}} \triangleq \pi_y$, where $n_y$ refers to the number of instances in class $y$. For the class posterior $P(y|\vx,i)$, given a learned classifier $f$, the classification probability can be estimated using a \textit{softmax} function~\cite{menon2021long,jiang2023detecting}: $P(y|\vx,i) \coloneqq \frac{e^{f_y(\vx)}}{\sum_{y^{\prime}}e^{f_{y^{\prime}}}(\vx)} \triangleq p_{y|\vx,i}$, where $f_y(\vx)$ represents the logit for class $y$. For the OOD posterior $P(i|\vx)$, since OOD detection is a binary classification task~\citep{bitterwolf2022breaking}, the posterior probability for an arbitrary OOD detector $g$~\citep{hendrycks2019deep,liu2020energy,lee2018simple} can be estimated using a \textit{sigmoid} function: $P(i|\vx) \coloneqq \frac{1}{1 + e^{-g(\vx)}}$, where $g(\vx)$ is the ID/OOD logit. Finally, for the class-specific scaling factor $\gamma_y(\vx)$, estimating $\gamma_y(\vx)$ is a sophisticated problem in long-tailed image recognition~\cite{menon2021long,kirichenko2020normalizing}. To focus on the OOD detection problem, we use a parametric mapping $\gamma_{y;\theta} : \vx \mapsto (0, \infty)$, where $\theta$ are learnable parameters that are optimized through gradient back-propagation.

This unified approach allows us to systematically estimate and correct for the bias introduced by class imbalance, thereby improving the performance of OOD detection in real-world scenarios.
According to \cref{thm:bias}, the balanced OOD detectors $\gbal$ is modeled as:

\begin{equation}
\begin{aligned}
  \sigma(\gbal(\vx)) = \left(\sum_y{\gamma_{y;\theta}(\vx) \frac{p_{y|\vx,i}}{\pi_y}}\right) \cdot \sigma(g(\vx)) \triangleq \beta(\vx) \cdot \sigma(g(\vx))
  \label{eq:ood_bal_infer}
\end{aligned}
\end{equation}

Substitute the sigmoid function $\sigma(z) = \frac{1}{1+e^{-z}}$ into \cref{eq:ood_bal_infer}, we have:

\begin{equation}
\begin{aligned}
  g(\vx) = \gbal(\vx) - \log{\left[(\beta(\vx) - 1) \cdot e^{\gbal(\vx)} + \beta(\vx)\right]}
  \label{eq:ood_bal_inverse}
\end{aligned}
\end{equation}

The derivation is displayed in \cref{app:sec:proof_logit}.
In order to make the detector $g(\vx)$ directly estimate the balanced OOD detection distribution, we can apply the binary cross-entropy loss on calibrated logits:

\begin{equation}
\begin{aligned}
  \mathcal{L}_{\text{ood}} = \mathcal{L}_{\text{BCE}}\left(g(\vx) - \log{\left[(\beta(\vx) - 1) \cdot e^{g(\vx)} + \beta(\vx)\right]}, t\right) \triangleq \mathcal{L}_{\text{BCE}}\left(g(\vx) - \Delta(\vx), t\right)
  \label{eq:ood_loss}
\end{aligned}
\end{equation}

where $t = \1 \{\vx \in \mathcal{X}^{in}\}$ indicates whether the sample $\vx$ comes from ID or not. 
For a ID sample from tail classes with $t = 1$, discussed in \cref{sec:method_main}, the bias $\beta$ is larger than samples from head classes. 
In this situation, the punishment $\Delta$ correspondingly increases, which encourages the detector $g$ to generate a higher score $g(\vx)$ to reduce the loss.
On the other hand, for a OOD sample that predicted as a head ID class by classifier $f$, $\beta$ and $\Delta$ become smaller than those predicted as tail ID classes, and $g$ are further forced to reduce the score $g(\vx)$ to decrease the loss, as the label $t$ for OOD samples is 0.
In practice, to alleviate the optimization difficulty, the $\Delta(\vx)$ for ID samples (with $t=1$) is cut off to be non-negative value, and $\Delta(\vx\prime)$ for OOD samples (with $t=0$) is cut off to be non-positive values, ensuring the optimization direct dose not conflict with the vanilla BCE loss function.

Meanwhile, according to \cref{ass:propto}, we add an extra constraint on $\gamma_{y;\theta}$ to ensure the posterior estimate $\pbal(y|\vx) = \gamma_y(\vx) \cdot \frac{p_{y|\vx,i}}{\pi_y} \cdot \sigma(g(\vx))$ and $\pbal(i|\vx) = \sum_y{\pbal(y|\vx)}$ will not exceed 1:

\begin{equation}
\begin{aligned}
  \mathcal{L}_{\gamma} = \max\left\{0, \sum_y{\gamma_{y;\theta}(\vx) \cdot \frac{p_{y|\vx,i}}{\pi_y}} \cdot \sigma(g(\vx)) - 1\right\} = \max\left\{0, \beta(\vx) \cdot \sigma(g(\vx)) - 1 \right\}
  \label{eq:ood_loss_gamma}
\end{aligned}
\end{equation}

Note that for each class $y$, the term $\frac{p_{y|\vx,i}}{\pi_y} \cdot \sigma(g(\vx)) > 0$ always holds, so that we only have to constrain the summary on all classes $\pbal(i|\vx) = \sum_y{\pbal(y|\vx)}$ within 1, as indicated in \cref{eq:ood_loss_gamma}, and $\pbal(y|\vx)$ for each class $y$ will be constrained as well.

Combining with $\mathcal{L}_{\text{ood}}$ and $\mathcal{L}_{\gamma}$ for optimization, the learned OOD detector $g^{*}$ has already estimated the balanced $P^{\text{bal}}(i|\vx)$. We thus predict the ID/OOD probability as usual: $\hat{p}(i|\vx) = \frac{1}{1+e^{-g^{*}(\vx)}}$. The other terms like $\gamma$, $\beta$, and $\Delta$ are no longer needed to compute, maintaining the inference efficiency and simplicity for OOD detection applications.
\section{Experiments}
\label{sec:exp}

In this section, we empirically validate the effectiveness of our \mmethod~on several representative imbalanced OOD detection benchmarks. The experimental setup is described in \cref{sec:exp_setup}, based on which extensive experiments and discussions are displayed in \cref{sec:exp_main} and \cref{sec:exp_abl}.

\subsection{Setup}
\label{sec:exp_setup}

\textbf{Datasets.} Following the literature~\citep{wang2022partial,jiang2023detecting,choi2023balanced,miao2023out}, we use the popular CIFAR10-LT, CIFAR100-LT~\citep{cao2019learning}, and ImageNet-LT~\citep{liu2019large} as imbalanced in-distribution datasets. 

For CIFAR10/100-LT benchmarks, the imbalance ratio (\ie, $\rho = \max_y(n_y) / \min_y(n_y)$) is set as 100~\citep{cao2019learning,wang2022partial}. The original CIAFR10/100 test sets are kept for evaluating the ID classification capability. For OOD detection, the TinyImages80M~\citep{torralba200880} is adopted as the auxiliary OOD training data, and the test set is semantically coherent out-of-distribution (SC-OOD) benchmark~\citep{yang2021semantically}.
% (details in \cref{app:impl_data}).

For the large-scale ImageNet-LT benchmark, training samples are sampled from the original ImageNet-1k~\citep{deng2009imagenet} dataset, and the validation set is taken for evaluation. We follow the OOD detection setting as~\citet{wang2022partial} to use ImageNet-Extra as auxiliary OOD training and ImageNet-1k-OOD for testing. Randomly sampled from ImageNet-22k~\citep{deng2009imagenet}, ImageNet-Extra contains 517,711 images belonging to 500 classes, and ImageNet-1k-OOD consists of 50,000 images from 1,000 classes. All the classes in ImageNet-LT, ImageNet-Extra, and ImageNet-1k-OOD are not overlapped.

\textbf{Evaluation Metrics.} For OOD detection, we report three metrics: (1) AUROC, the area under the receiver operating characteristic curve, (2) AUPR, the area under the precision-recall curve, and (3) FPR95, the false positive rate of OOD samples when the true positive rate of ID samples are 95\%. 
For ID classification, we measure the macro accuracy of the classifier.
We report the mean and standard deviation of performance (\%) over six random runs for each method.

\textbf{Implementation Details.} For the ID classifier $f$, following the settings of \citet{wang2022partial}, we train ResNet18~\citep{he2016deep} models on the CIFAR10/100LT benchmarks, and leverage ResNet50 models for the ImageNet-LT benchmark.
Logit adjustment loss~\citep{menon2021long} is adopted to alleviate the imbalanced ID classification. 
Detailed settings are displayed in \cref{app:impl_train}.
For the OOD detector $g$, as \citet{bitterwolf2022breaking} suggest, we implement $g$ as a binary discriminator (abbreviated as \textit{BinDisc}) to perform ID/OOD identification. 
Detector $g$ shares the same backbone (feature extractor) as classifier $f$, and $g$ only attaches an additional output node to the classification layer of $f$. 
In addition, we also add a linear layer on top of the backbone to produce the $\gamma$ factors to perform the training regularization with \cref{eq:ood_loss} and \cref{eq:ood_loss_gamma}.
To reduce the optimization difficulty, the gradient is stopped between \cref{eq:ood_loss} and \cref{eq:ood_loss_gamma} (but still shared in the backbone), where \cref{eq:ood_loss} aims at training $g$ while \cref{eq:ood_loss_gamma} only optimize $\gamma$.
Furthermore, to verify the versatility of our method, we also implement several representative OOD detection methods (\eg, OE~\citep{hendrycks2019deep}, Energy~\citep{liu2020energy}, \etc) into binary discriminators, and equip them with our \mmethod~framework. 
For more details please refer to \cref{app:impl_method}.

\textbf{Methods for comparison.} In the following sections, we mainly compare our method on three benchmarks with the typical OOD detectors including OE~\citep{hendrycks2019deep}, Energy~\citep{liu2020energy}, and BinDisc~\citep{bitterwolf2022breaking}, as well as some state-of-the-art detectors such as PASCL~\citep{wang2022partial}, ClassPrior~\citep{jiang2023detecting}, EAT~\citep{wei2024eat}, COCL~\citep{miao2023out}, \etc.
Specifically, as the results on the ImageNet-LT benchmark reported by COCL share a large discrepancy against PASCL (especially the AUPR measure), we re-implement COCL based on their released code\footnote{https://github.com/mala-lab/COCL} and report the aligned results in \cref{tab:exp_imagenet}.

\subsection{Main Results}
\label{sec:exp_main}

\begin{table}[t]
  \centering
  \caption{OOD detection evaluation on CIFAR10/100-LT benchmarks. The best results are marked in \textbf{bold}, and the secondary results are marked with \underline{underlines}. The base model is ResNet18.}
  \label{tab:exp_cifar}
  \vskip 0.15in
  \small
  \begin{tabular}{lcccccccc}
    \toprule
    \multirow{2}[1]{*}{Method} & \multicolumn{4}{c}{CIFAR10-LT}  & \multicolumn{4}{c}{CIFAR100-LT} \\
    \cmidrule(lr){2-5} \cmidrule(lr){6-9}
    & AUROC↑ & AUPR↑ & FPR95↓ & ACC↑ & AUROC↑ & AUPR↑ & FPR95↓ & ACC↑   \\
    \midrule
    % \midrule
    MSP                  & 72.28 & 70.27 & 66.07 & 72.34 & 61.00 & 57.54 & 82.01 & 40.97 \\
    OECC                 & 87.28 & 86.29 & 45.24 & 60.16 & 70.38 & 66.87 & 73.15 & 32.93 \\
    EnergyOE             & 89.31 & 88.92 & 40.88 & 74.68 & 71.10 & 67.23 & 71.78 & 39.05 \\
    OE                   & 89.77 & 87.25 & 34.65 & 73.84 & 72.91 & 67.16 & 68.89 & 39.04 \\
    PASCL                & 90.99 & 89.24 & 33.36 & 77.08 & 73.32 & 67.18 & 67.44 & 43.10 \\
    OpenSampling         & 91.94 & 91.08 & 36.92 & 75.78 & 74.37 & 75.80 & 78.18 & 40.87 \\
    ClassPrior           & 92.08 & 91.17 & 34.42 & 74.33 & 76.03 & 77.31 & 76.43 & 40.77 \\
    BalEnergy            & 92.56 & 91.41 & 32.83 & 81.37 & 77.75 & 78.61 & 73.10 & 45.88 \\
    EAT                  & 92.87 & 92.40 & 28.83 & 81.31 & 75.45 & 70.87 & \textbf{64.01} & 46.23 \\
    COCL                 & \underline{93.28} & 92.24 & 30.88 & \underline{81.56} & \underline{78.25} & \underline{79.37} & 74.09 & \underline{46.41} \\
    \midrule
    PASCL + \textbf{Ours}& 92.93 & \underline{92.51} & \underline{28.73} & 78.96 & 74.23 & 68.63 & \underline{65.65} & 44.60 \\
    COCL + \textbf{Ours} & \textbf{93.55} & \textbf{92.83} & \textbf{28.52} & \textbf{81.83} & \textbf{78.50} & \textbf{79.96} & 71.65 & \textbf{46.80} \\
    \bottomrule
  \end{tabular} 
\end{table}

\textbf{\mmethod~significantly outperforms previous SOTA methods on CIFAR10/100-LT benchmarks.}
As shown in \cref{tab:exp_cifar}, our \mmethod~achieves new SOTA performance on both of CIFAR10/100-LT benchmarks.
Built on top of the strong baseline PASCL~\citep{wang2022partial}, our method leads to 1.9\% increase of AUROC, 3.2\% increase of AUPR, and 4.6\% decrease of FPR95 on CIFAR10-LT, with 0.9\% - 1.8\% enhancements of respective evaluation metrics on CIFAR100-LT.
By integrating with COCL~\citep{miao2023out}, our \mmethod~further pushes the imbalanced OOD detection on CIFAR10/100-LT benchmarks towards a higher performance, \eg, achieving 93.55\%/78.50\% of AUROC respectively.
% The detailed comparison with PASCL baseline on six OOD test sets is reported in \cref{app:exp_cifar_detail}.
To further demonstrate the efficacy, we validate our method on the real-world large-scale ImageNet-LT~\citep{liu2019large} benchmark, and the results are displayed below.

\begin{table}[ht]
  \centering
  \caption{OOD detection evaluation on the ImageNet-LT benchmark. The base model is ResNet50.}
  \label{tab:exp_imagenet}
  \vskip 0.15in
  % \renewcommand\arraystretch{1.1}
  % \setlength{\tabcolsep}{5pt}
  % \small
  \begin{tabular}{lcccc}
    \toprule
    Method & AUROC↑ & AUPR↑ & FPR95↓ & ACC↑  \\
    % \midrule
    \midrule
    MSP                  & 53.81 & 51.63 & 90.15 & 39.65 \\
    OECC                 & 63.07 & 63.05 & 86.90 & 38.25 \\
    EnergyOE             & 64.76 & 64.77 & 87.72 & 38.50 \\
    OE                   & 66.33 & 68.29 & 88.22 & 37.60 \\
    PASCL                & 68.00 & 70.15 & 87.53 & 45.49 \\
    EAT                  & 69.84 & 69.25 & 87.63 & 46.79 \\
    COCL                 & 73.87 & 72.63 & 76.35 & \underline{51.00} \\
    \midrule
    PASCL + \textbf{Ours}& \underline{74.69} & \underline{73.08} & \underline{74.37} & 46.63 \\
    COCL + \textbf{Ours}&  \textbf{75.84} & \textbf{73.19} & \textbf{74.96} & \textbf{52.43}  \\    
    \bottomrule
  \end{tabular} 
  % \vskip -0.1in
\end{table}

\textbf{\mmethod~achieves superior performance on the ImageNet-LT benchmark.}
As \cref{tab:exp_imagenet}~implies, our \mmethod~brings significant improvements against PASCL, \eg, 6.7\% increase on AUROC and 13.2\% decrease on FPR95, and further enhance the SOTA method COCL for a better OOD detection performance, \eg, 75.84 of AUROC.
Since the performance enhancement is much greater than those on CIFAR10/100-LT benchmarks, we further statistic the class-aware error distribution on wrongly detected ID/OOD sample in \cref{fig:app_err_stat_dataset}. 
The results indicate our method builds a relatively better-balanced OOD detector on ImageNet-LT, which leads to higher performance improvements.
Besides, as we follow the literature~\citep{wang2022partial,miao2023out} to employ ResNet18 on CIFAR10/100-LT while adopt ResNet50 on ImageNet-LT, the model capacity also seems to play a vital role in balancing the OOD detection on imbalanced data distribution, particularly in more challenging real-world scenarios.

\textbf{Additional comparison following ClassPrior's setting.}
Since ClassPrior~\citep{jiang2023detecting} uses a totally different setting against the literature~\citep{wang2022partial,miao2023out} on the ImageNet datasets, including different ID imbalance ratio and OOD test sets, we additionally compare with ClassPrior in \cref{tab:exp_cp_imagenet}.
For a fair comparison, as ClassPrior does not leverage real OOD data for training, we eliminate the auxiliary ImageNet-Extra dataset and utilize the recent VOS~\citep{du2022vos} technique to generate OOD syntheses for our regularization.
According to \cref{tab:exp_cp_imagenet}, our method consistently outperforms ClassPrior by a large margin on all subsets.

\begin{table}[ht]
  \centering
  \caption{Comparison on ClassPrior's ImageNet-LT-a8 benchmark. The base model is MobileNet.}
  \label{tab:exp_cp_imagenet}
  \vskip 0.15in
  \renewcommand\arraystretch{1.2}
  \setlength{\tabcolsep}{5pt}
  \small
  \begin{tabular}{lcccccccc}  % cc
    \toprule
    \multirow{2}[1]{*}{Method} & \multicolumn{2}{c}{iNaturalist} & \multicolumn{2}{c}{SUN}  & \multicolumn{2}{c}{Places} & \multicolumn{2}{c}{Textures} \\  % \textbf{}
    \cmidrule(lr){2-3} \cmidrule(lr){4-5} \cmidrule(lr){6-7} \cmidrule(lr){8-9}
    & AUROC↑ & FPR95↓ & AUROC↑ & FPR95↓ & AUROC↑ & FPR95↓ & AUROC↑ & FPR95↓  \\
    \midrule
    ClassPrior    & 82.51 & 66.06 & 80.08 & 69.12 & 74.33 & 79.41 & 69.58 & 78.07  \\
    \textbf{Ours} & \textbf{86.15} & \textbf{59.13} & \textbf{81.29} & \textbf{65.88} & \textbf{77.57} & \textbf{76.26} & \textbf{72.82} & \textbf{72.73} \\
    \bottomrule
  \end{tabular} 
  % \vskip -0.1in
\end{table}

\subsection{Ablation Studies}
\label{sec:exp_abl}

In this section, we conduct in-depth ablation studies on the CIFAR10-LT benchmark to assess the validity and versatility of our proposed \mmethod~framework, and the results are reported as follows.

\begin{table}[ht]
  \centering
  \caption{Ablation on the $\gamma_y$ estimates and technique integration on the CIFAR10-LT benchmark.}
  \label{tab:abl_gamma_train}
  \vskip 0.15in
  \renewcommand\arraystretch{1.2}
  \setlength{\tabcolsep}{5pt}
  \small
  \begin{tabular}{lcccc}
    \toprule
    $\gamma_y$ Estimates & AUROC↑ & AUPR↑ & FPR95↓ & ID ACC↑  \\
    \midrule
    % \midrule
    none                      & 90.06 & 88.72 & 33.39 & 78.22 \\
    $\gamma_y \coloneqq const$   & 89.75 & 86.28 & 32.67 & 78.50 \\
    $\gamma_y \coloneqq \gamma_{y;\theta}$  & 92.04 & 91.32 & 31.24 & 79.16 \\
    $\gamma_y \coloneqq \gamma_{y;\theta}(\vx)$   & 92.23 & 91.92 & 29.95 & 79.56 \\
    \midrule
    + PASCL                   & 92.93 & 92.51 & 28.73 & 78.96 \\
    + COCL                    & \textbf{93.55} & \textbf{92.83} & \textbf{28.52} & \textbf{81.83} \\
    \bottomrule
  \end{tabular}
  % \vskip -0.1in
\end{table}

\begin{figure}[ht]
% \vskip 0.2in
\begin{center}
\centerline{\includegraphics[width=0.9\linewidth]{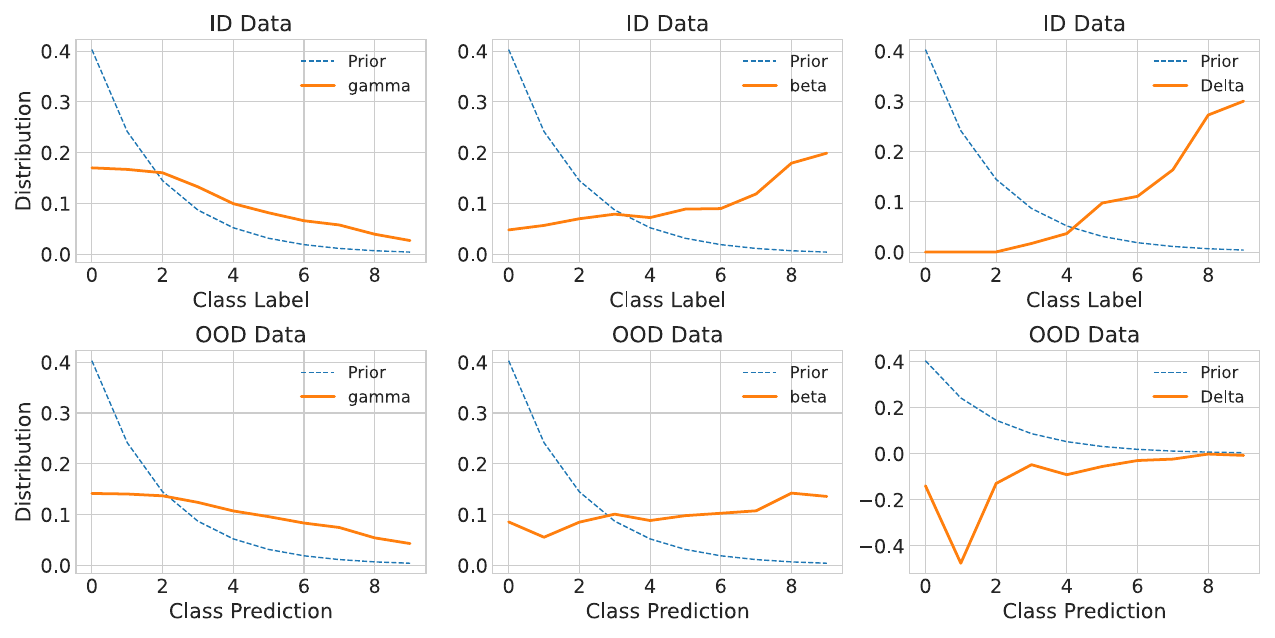}}
\caption{\textbf{Statistics on $\gamma$, $\beta$, and $\Delta$ from the CIFAR10-LT benchmark}. (1) Upper: distributions on ID samples from head to tail (left to right) class indices; (2) Lower: distributions on OOD samples predicted as head to tail (left to right) ID classes.}
\label{fig:exp_stat}
\end{center}
\vskip -0.2in
\end{figure}

\textbf{\cref{ass:propto} ($\pbal(y|\vx) = \gamma_y(\vx) \cdot \frac{P(y|\vx)}{P(y)}$) is consistent with empirical results.}
To validate that the coefficient $\gamma_y(\vx)$ depends on input sample $\vx$ and differs for each class $y$, we perform a series of ablation studies in \cref{tab:abl_gamma_train}.
We first build a baseline model with BinDisc~\citep{bitterwolf2022breaking} only, and no extra regularization is adopted to mitigate the imbalanced OOD detection.
As shown in the first row from \cref{tab:abl_gamma_train}, the baseline (denoted as \textit{none} of $\gamma_y$ estimates) presents a fair OOD detection performance (\eg, 90.06\% of AUROC and 33.39\% of FPR95).
Then, we simply take $\gamma_y$ as a constant for all classes (denoted as $\gamma_y \coloneqq const$) by assuming $\pbal(\vx|y)=P(\vx|y)$ and $\gamma_y \coloneqq \frac{1}{K}$ (see \cref{app:sec:proof_prop}) to apply the training regularization in \cref{sec:method_train}.
According to \cref{tab:abl_gamma_train}, the OOD detection performance receives a slight decline of AUROC (from 90.06\% to 89.75\%), despite the better FPR95 result.
Consequently, after treating $\gamma_y$ as a learnable variable for each class $y$ (denoted as $\gamma_y \coloneqq \gamma_{y;\theta}$), the detector receives significant improvement on all the three measures of AUROC, AUPR, and FPR95.
Finally, setting $\gamma_y$ as an input-dependent and class-aware learnable variable (denoted as $\gamma_y \coloneqq \gamma_{y;\theta}(\vx)$) brings further OOD detection enhancement.

In addition, we further statistics the practical distribution of $\gamma$ in \cref{fig:exp_stat}, where $\gamma_y(\vx)$ is relatively higher for the ID sample from head classes.
It is consistent with $\gamma_y(\vx) = \frac{1}{K} \frac{\pbal(\vx|y)}{P(\vx|y)}$ (see \cref{app:sec:proof_prop}), as the data likelihood $P(\vx|y)$ is close to the balanced situation ($\pbal(\vx|y)$) for head samples while over-estimated for tail classes, leading to a higher fraction of $\frac{\pbal(\vx|y)}{P(\vx|y)}$ for head classes and lower for tail classes.
Detailed discussion is presented in \cref{app:sec:proof_stat}.

% Both the empirical results and theoretical analyses indicate the efficacy and necessity of \cref{ass:propto} for imbalanced OOD detection regularization.

\textbf{The designed training-time regularization in \cref{sec:method_train} is effective.}
As shown in \cref{tab:abl_gamma_train}, compared with the BinDisc baseline in the first row, adding \cref{eq:ood_loss} and \cref{eq:ood_loss_gamma} by setting $\gamma_y \coloneqq \gamma_{y;\theta}(\vx)$ leads to significant improvement on the OOD detection performance (\ie, 2.2\% increase of AUROC and 3.4\% decrease of FPR95).
As \cref{fig:exp_stat} shows, the automatically learned adjustments (\eg, $\beta$ and $\Delta$) are consistent with our motivation, where we aim to punish the ID data from tail classes with a large positive $\Delta$, as well as the OOD samples predicted as ID head classes with a large negative $\Delta$ (\cref{sec:method_train}).
Moreover, as indicated by \cref{tab:abl_gamma_train}, integrating with PASCL~\citep{wang2022partial} and COCL~\citep{miao2023out} techniques further boosts the imbalanced OOD detection, ultimately resulting in a new SOTA performance.

\textbf{\mmethod~generalizes to various OOD detection methods.}
To verify the versatility of our statistical framework, we implement the \mmethod~framework with different OOD detectors beside BinDisc, including the OE~\citep{hendrycks2019deep}, Energy~\citep{liu2020energy}, and the Mahalanobis distance~\citep{ren2021maha}.
For those detectors, we add an extra linear layer to conduct logistic regression on top of their vanilla OOD scores (see \cref{app:impl_method}), and leverage our training-time regularization to optimize those detectors in a unified manner.
According to the results presented in \cref{tab:exp_abl_ood_method}, our \mmethod~consistently boosts the original OOD detectors by a large margin on all the AUROC, AUPR, and FPR95 measures.
In real-world applications, one may choose a proper formulation of our \mmethod~to meet specialized needs.

\vskip 0.1in
\begin{minipage}[ht]{\textwidth}
\begin{minipage}[t]{0.50\textwidth}
\makeatletter\def\@captype{table}
% \begin{table}[h]
  \centering
  \scriptsize
  \setlength{\tabcolsep}{4pt}
  \caption{Generalization to OOD detectors.}
  \label{tab:exp_abl_ood_method}
  \begin{tabular}{c|cccc}
    \toprule
    OOD Detector & Method & AUROC↑ & AUPR↑ & FPR95↓  \\
    \midrule
    \multirow{1}[2]{*}{Prob-Based} 
    & OE             & 89.91 & 87.32 & 34.06  \\
    & \textbf{+Ours} & \textbf{91.52} & \textbf{89.03} & \textbf{30.64} \\
    \midrule
    \multirow{1}[2]{*}{Energy-Based} 
    & Energy         & 90.27 & 88.73 & 34.42 \\
    & \textbf{+Ours} & \textbf{91.41} & \textbf{90.63} & \textbf{31.81} \\
    \midrule
    \multirow{1}[2]{*}{Dist-Based} 
    & Maha           & 88.26 & 87.94 & 42.74 \\
    & \textbf{+Ours} & \textbf{89.30} & \textbf{88.68} & \textbf{39.54} \\
    \bottomrule
  \end{tabular} 
% \end{table}
\end{minipage}
\begin{minipage}[t]{0.50\textwidth}
\makeatletter\def\@captype{table}
% \begin{table}[h]
  \centering
  \scriptsize
  \setlength{\tabcolsep}{4pt}
  \caption{Robustness to OOD test sets.}
  \label{tab:exp_abl_ood_dataset}
  \begin{tabular}{c|cccc}
    \toprule
    OOD Dataset & Method & AUROC↑ & AUPR↑ & FPR95↓  \\
    \midrule
    \multirow{1}[2]{*}{Far-OOD} 
    & PASCL          & 96.63 & 98.06 & 12.18 \\
    & \textbf{+Ours} & \textbf{97.50} & \textbf{98.26} &  \textbf{9.65} \\
    \midrule
    \multirow{1}[2]{*}{Near-OOD} 
    & PASCL          & 84.43 & 82.99 & 57.27 \\
    & \textbf{+Ours} & \textbf{86.61} & \textbf{85.71} & \textbf{55.51} \\
    \midrule
    \multirow{1}[2]{*}{Spurious-OOD} 
    & PASCL          & 79.00 & 81.83 & 63.57 \\
    & \textbf{+Ours} & \textbf{83.27} & \textbf{84.19} & \textbf{57.69} \\
    \bottomrule
  \end{tabular}
% \end{table}
\end{minipage}
\end{minipage}
\vskip 0.1in

\textbf{\mmethod~is robust to different OOD test sets.}
In the preceding sections, we evaluated our method on the CIFAR10-LT benchmark, where the SCOOD test set~\cite{yang2021semantically} comprises 6 subsets covering different scenarios.
As suggested by \citet{fort2021exploring}, the SVHN subset can be viewed as far OOD, and the CIFAR100 subset can be seen as near OOD (with CIFAR10-LT as ID).
According to the detailed results in \cref{tab:exp_abl_ood_dataset}, our ImOOD brings consistent enhancement against the strong baseline PASCL regardless of the near or far OOD test set.
Furthermore, we also report the spurious OOD detection evaluation in \cref{tab:exp_abl_ood_dataset}.
Specifically, we follow \citet{ming2022impact} to take WaterBird as the imbalanced ID dataset, which also suffers from the imbalance problem (on water birds and land birds), and a subset of Places~\cite{zhou2017places} as the spurious OOD test set (with spurious correlation to background). Results in \cref{tab:exp_abl_ood_dataset} also demonstrate our method's robustness in handling spurious OOD problems, with a considerable improvement of 4.3\% increase on AUROC and 5.96\% decrease on FPR95.
The robustness of \mmethod~to various OOD testing scenarios is verified.

\subsection{\mmethod's Inference-time Application}
\label{sec:abl_infer}

Despite our main focus on training more balanced OOD detectors, we also make some attempts to apply our method during pre-trained models' inference.
According to our \cref{thm:bias}, for an existing OOD detector $P(i|x)$ (\eg, trained with BinDisc), we can calculate the bias term $\beta(x)$ to regulate the vanilla scorer $P(i|x)$ into balanced $P^{bal}(i|x) = \beta(x) \cdot P(i|x)$. However, as $\beta(x) = \sum_y{\gamma_y(x)\frac{P(y|x,i)}{P(y)}}$, the estimation of $\gamma_y(x)$ presents considerable difficulty without training, but we have also tried some trivial approaches in \cref{tab:abl_infer}.

\begin{table}[ht]
  \centering
  \caption{Attempts to apply our \mmethod~into pre-trained models' inference stages.}
  \label{tab:abl_infer}
  \vskip 0.15in
  \renewcommand\arraystretch{1.2}
  % \small
  \begin{tabular}{lcccc}
    \toprule
    Method & Detector & AUROC↑ & AUPR↑ & FPR95↓ \\
    \midrule
    BinDisc                 & $P(i|x)$               & 90.06 & 88.72 & 33.39 \\
    % \midrule
    +\textbf{Ours} (infer)  & $\beta_1(x)P(i|x)$     & 90.34 & 88.45 & 32.10 \\
    +\textbf{Ours} (infer)  & $\hat{\beta}(x)P(i|x)$ & \underline{90.86} & \underline{88.95} & \underline{30.80} \\
    +\textbf{Ours} (train)  & $\beta(x)P(i|x)$       & \textbf{92.23} & \textbf{91.92} & \textbf{29.95} \\
    \bottomrule
  \end{tabular} 
\end{table}

First, we simply treat $\gamma_y(x)$ as a constant (\eg, $\gamma_y(x) \equiv \gamma_1 = 1$) for arbitrary input $x$ and class $y$ to calculate the bias term (denoted as $\beta_1(x)$), and the results on CIFAR10-LT benchmark immediately witness a performance improvement (\eg, 0.28\% increase on AUROC and 1.29\% decrease on FPR95) compared to the baseline OOD detector.
However, the improvement is relatively insignificant, and the phenomenon is consistent with our ablation studies in \cref{tab:abl_gamma_train}, which demonstrates the importance of learning a \textit{class-dependent} and \textit{input-dependent} $\gamma_y(x)$ during training.

Then, inspired by the statistical results in \cref{fig:exp_stat}, we take a further step to use a polynomial (rank=2) to fit the curve between the predicted class $y$ and $\gamma_y(x)$ learned by another model, and apply the coefficients to estimate a \textit{class-dependent} $\hat{\gamma}_y$ for the baseline model (denoted as $\hat{\beta}(x)P(i|x)$).
This operation receives further enhancement on OOD detection and gets close to our learned model (\eg, 30.80\% \textit{v.s.} 29.95\% of FPR95).

In conclusion, our attempts illustrate the potential of applying our method to an existing model without post-training, and we will continue to extend $\hat{\gamma}_y$ to an \textit{input-dependent} version (say $\hat{\gamma}_y(x)$) in our future work.

\section{Conclusion and Discussion}
\label{sec:conc}

This paper establishes a statistical framework \mmethod~to formulate OOD detection on imbalanced data distribution.
Through theoretical analysis, we find there exists a class-aware biased item between balanced and imbalanced OOD detection models.
Based on it, our \mmethod~provides a unified training-time regularization technique to alleviate the imbalance problem.
On three popular imbalanced OOD detection benchmarks, extensive experiments and ablation studies to demonstrate the validity and versatility of our method.
We hope our work can inspire new research in this community.

\textbf{Limitations.}
Following the literature, \mmethod~utilizes auxiliary OOD training samples to refine the the ID/OOD decision boundary. 
However, unforeseen OOD samples in real-world applications could potentially challenge this boundary.
To mitigate this issue, integrating online-learning strategies for adaptive decision-making during testing is a promising avenue.
We view this as our future work.

% \textcolor{red}{Cite example~\cite{vaswani2017attention}}

\medskip
\newpage

\begin{ack}
This work was supported in part by the Fundamental Research Funds for the Central Universities, in part by Alibaba Cloud through the Research Intern Program, and in part by Zhejiang Provincial Natural Science Foundation of China under Grant No. LDT23F01013F01.
\end{ack}

{
\small

\bibliographystyle{plainnat}
\bibliography{ref}
}

%%%%%%%%%%%%%%%%%%%%%%%%%%%%%%%%%%%%%%%%%%%%%%%%%%%%%%%%%%%%

\newpage
\appendix
% \appendixpage

\renewcommand\thesection{\Alph{section}}
\renewcommand\thefigure{A\arabic{figure}}    
\renewcommand\thetable{A\arabic{table}}   
\renewcommand\theequation{A\arabic{equation}} 
\setcounter{figure}{0}  
\setcounter{table}{0}  
\setcounter{equation}{0}  

\section{Theorem Proofs}
\label{app:sec:proof}

\subsection{Proof for \cref{ass:propto}}
\label{app:sec:proof_prop}

From Bayesian Theorem, we have:

\begin{equation}
\begin{aligned}
  P(y|\vx) = \frac{P(\vx|y) \cdot P(y)}{P(\vx)}
  \label{eq:prof_bayes}
\end{aligned}
\end{equation}

In class-balanced scenarios, the class prior $\pbal(y)$ equals $\frac{1}{K}$, where $K$ is the class number. The balanced classification posterior is given by:

\begin{equation}
\begin{aligned}
  \pbal(y|\vx) = \frac{\pbal(\vx|y) \cdot \frac{1}{K}}{\pbal(\vx)}
  \label{eq:prof_bayes_bal}
\end{aligned}
\end{equation}

where the marginal probability $\pbal(\vx) \equiv P(\vx)$ is independent from balanced or imbalanced class distribution. Combining with \cref{eq:prof_bayes} and \cref{eq:prof_bayes_bal}, we have:

\begin{equation}
\begin{aligned}
 \pbal(y|\vx) = \frac{\pbal(\vx|y)}{P(\vx|y)} \cdot \frac{1}{K} \cdot \frac{P(y|vx)}{P(y)} \triangleq \gamma_y(\vx) \cdot \frac{P(y|vx)}{P(y)}
  \label{eq:prof_gamma}
\end{aligned}
\end{equation}

where $\gamma_y(\vx) = \frac{1}{K} \frac{\pbal(\vx|y)}{P(\vx|y)}$ captures the likelihood difference for the same sample x between balanced and imbalanced distributions.

\subsection{Discussion for \cref{ass:propto} and Statistic Results in \cref{fig:exp_stat}}
\label{app:sec:proof_stat}

According to \cref{fig:exp_stat} that depicts the statistical distribution on the CIFAR10-LT benchmark, $\gamma_y(\vx)$ is higher for the ID sample from head classes than tail classes.
This phenomenon is consistent with $\gamma_y(\vx) = \frac{1}{K} \frac{\pbal(\vx|y)}{P(\vx|y)}$.
As the model has seen sufficient training examples for head classes, the data likelihood $P(\vx|y^h)$ is approaching the balanced situation of $\pbal(\vx|y)$, the fraction of $\frac{\pbal(\vx|y)}{P(\vx|y)}$ is closed to 1.
Meanwhile, since a large number of tail samples are not presented during training, the sample space is narrowed down and the data likelihood $P(\vx|y)$ for tail classes is over-estimated (even higher than $\pbal(\vx|y)$), leading to a lower fraction of $\frac{\pbal(\vx|y)}{P(\vx|y)}$.
Therefore, $\gamma_y(\vx) = \frac{1}{K} \frac{\pbal(\vx|y)}{P(\vx|y)}$ presents higher values for head classes than tail classes.
The statistical results are well-aligned with the theoretical analysis.

\subsection{Derivation for \cref{eq:ood_bal_inverse}}
\label{app:sec:proof_logit}

Substituting the sigmoid function $\sigma(z) = \frac{1}{1+e^{-z}}$ into \cref{eq:ood_bal_infer}, we have:

\begin{equation}
\begin{aligned}
  \frac{1}{1+e^{-\gbal(\vx)}} = \beta(\vx) \cdot \frac{1}{1+e^{-g(\vx)}}
  \label{eq:prof_bal_infer_sigmoid}
\end{aligned}
\end{equation}

Consequently, we have:

\begin{equation}
\begin{aligned}
  e^{-g(\vx)} = \beta(\vx) \cdot e^{-\gbal(\vx)} + \beta(\vx) - 1
  \label{eq:prof_bal_infer_exp}
\end{aligned}
\end{equation}

And then:

\begin{equation}
\begin{aligned}
  g(\vx) 
  &= -\log{\beta(\vx) \cdot e^{-\gbal(\vx)} + \beta(\vx) - 1} \\
  &= -\log e^{-\gbal(\vx)} \left[1 + (\beta(\vx) - 1) \cdot e^{\gbal(\vx)} \right] \\
  &= \gbal(\vx) - \log{\left[(\beta(\vx) - 1) \cdot e^{\gbal(\vx)} 
 + 1\right]}
  \label{eq:prof_bal_infer_logit}
\end{aligned}
\end{equation}

\section{Experimental Settings and Implementations}
\label{app:impl}

\subsection{Training Settings}
\label{app:impl_train}

For a fair comparison, we mainly follow PASCL's~\citep{wang2022partial} settings. 
On CIFAR10/100-LT benchmarks, we train ResNet18~\citep{he2016deep} for 200 epochs using Adam optimizer, with a batch size of 256. The initial learning rate is 0.001, which is decayed to 0 using a cosine annealing scheduler. The weight decay is $5 \times 10^{-4}$.
On the ImageNet-LT benchmark, we train ResNet50~\citep{he2016deep} for 100 epochs with SGD optimizer with the momentum of 0.9. The batch size is 256. The initial learning rate is 0.1, which is decayed by a factor of 10 at epochs 60 and 80. The weight decay is $5 \times 10^{-5}$.
During training, each batch contains an equal number of ID and OOD data samples (\ie, 256 ID samples and 256 OOD samples).
We use 2 NVIDIA V100-32G GPUs in all our experiments.

For a better performance, one may carefully tune the hyper-parameters to train the models.

\subsection{Implementation Details}
\label{app:impl_method}

In the manuscript, we proposed a generalized statistical framework to formularize and alleviate the imbalanced OOD detection problem.
This section provides more details on implementing different OOD detectors into a unified formulation, \eg, a binary ID/OOD classifier~\cite{bitterwolf2022breaking}:

\begin{itemize}[partopsep=2pt,itemsep=3pt,topsep=0pt,parsep=0pt]
    \item For BinDisc~\citep{bitterwolf2022breaking}, we simply append an extra ID/OOD output node to the classifier layer of a standard ResNet model, where ID classifier $f$ and OOD detector $g$ share the same feature extractor. Then we adopt the sigmoid function to convert the logit $g(\vx)$ into the ID/OOD probability $\hat{p}(i|\vx) = \frac{1}{1+e^{-g(\vx)}}$.
    \item For Energy~\citep{liu2020energy}, following \citet{du2022vos}, we first compute the negative free-energy $E(\vx;f) = \log\sum_{y}e^{f_y(\vx)}$, and then attach an extra linear layer to calculate the ID/OOD logit $g(\vx) = w \cdot E(\vx;f) + b$, where $w,b$ are learnable scalars. Hence, the sigmoid function is able to convert the logit $g(\vx)$ into the probability $\hat{p}(i|\vx)$.
    \item For OE~\citep{hendrycks2019deep}, similarly, we compute the maximum softmax-probability $\text{MSP}(\vx;f) = \max_y\frac{e^{f_y(\vx)}}{\sum_{y^{\prime}}e^{f_{y^{\prime}}(\vx)}}$, and use another linear layer to obtain the ID/OOD logit $g(\vx) = w \cdot \text{MSP}(\vx;f) + b$. 
    \item For Mahalanobis distance~\citep{ren2021maha}, we maintain an online feature-pool for each ID class $y$, so as to calculate the Mahalanobis distances for the test samples as $D_m(\vx)$. Then, an extra linear layer is adopted to transform the distances to ID/OOD logits as $g(\vx) = w \cdot D_m(\vx) + b$. 
\end{itemize}

By doing so, one can exploit the unified training-time regularization in \cref{sec:method_train} to derive a strong OOD detector.

% \section{Resource requirements}
% \label{app:resource}

\section{Additional Experimental Results}
\label{app:exp}

\subsection{Additional Error Statistics}
\label{app:error_stat}

In this section, we present the class-aware error statistics for OOD detection on different benchmarks (see \cref{fig:app_err_stat_dataset}) and different detectors (see \cref{fig:app_err_stat_method}).
For each OOD detector on each benchmark, we first compute the OOD scores $g(\vx)$ for all the ID and OOD test data. 
Then, a threshold $\lambda$ is determined to ensure a high fraction of OOD data (\ie, 95\%) is correctly detected as out-of-distribution.
Recall that $g(\vx)$ indicates the in-distribution probability for a given sample (\ie, $P(i|\vx)$).
Finally, we statistic the distributions of wrongly detected ID/OOD samples.

Specifically, in \cref{fig:app_err_stat_dataset} and \cref{fig:app_err_stat_method}, 
the first row displays \textcolor{ID}{class labels} of ID samples that are wrongly detected as OOD (\ie, $g(\vx) < \lambda$), and the second row exhibits \textcolor{OOD}{class predictions} of OOD samples that are wrongly detected as ID (\ie, $g(\vx) > \lambda$).
In each subplot, we statistic the distribution over \textit{head}, \textit{middle}, and \textit{tail} classes (the division rule follows \citet{wang2022partial}) for simplicity.
Note that the total count of wrong \textcolor{OOD}{OOD} samples (in the second row) is constant, and a better OOD detector $g$ will receive fewer wrong \textcolor{ID}{ID} samples (in the first row).

\cref{fig:app_err_stat_dataset} compares our method with PASCL~\citep{wang2022partial} on CIFAR10/100-LT and ImageNet-LT benchmarks. 
The results indicate our method (dashed bar) performs relatively more balanced OOD detection on all benchmarks. 
We reduce the error number of ID samples from tail classes, and simultaneously decrease the error number of OOD samples that are predicted as head classes.
In particular, our method achieves considerable balanced OOD detection on ImageNet-LT (the right column), which brings a significant performance improvement, as discussed in \cref{sec:exp_main}.

\cref{fig:app_err_stat_method} compares our integrated version and vanilla OOD detectors (\ie, OE, Energy, and BinDisc) on the CIFAR10-LT benchmark. 
Similarly, the results indicate our method performs relatively more balanced OOD detection against all original detectors. 
The versatility of our \mmethod~framework is validated, as discussed in \cref{sec:exp_abl}.

However, the statistics in \cref{fig:app_err_stat_dataset} and \cref{fig:app_err_stat_method} indicate the imbalanced problem has not been fully solved, since the scarce training samples of tail classes still affect the data-driven learning process. More data-level re-balancing techniques may be leveraged to further address this issue.

\begin{figure}[ht]
  \centering
  \includegraphics[width=0.73\linewidth]{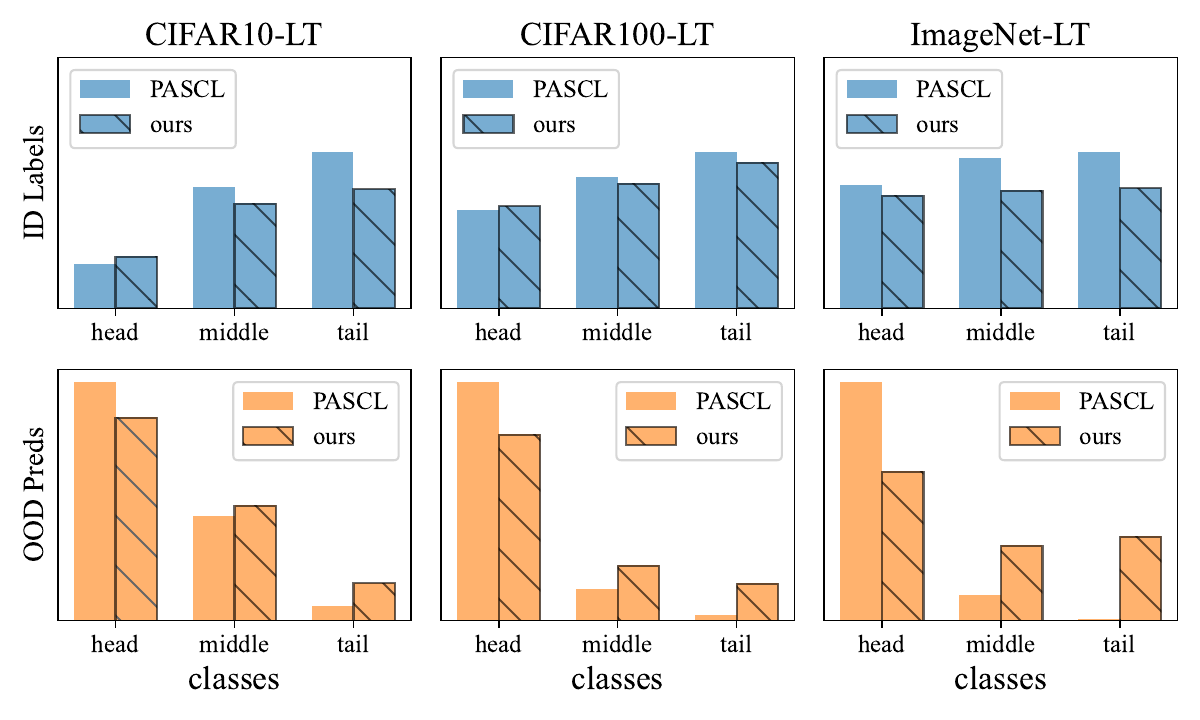}
  \caption{Class-aware \textit{error} statistics for OOD detection on different benchmarks.}
  \label{fig:app_err_stat_dataset}
\end{figure}

\begin{figure}[ht]
  \centering
  \includegraphics[width=0.73\linewidth]{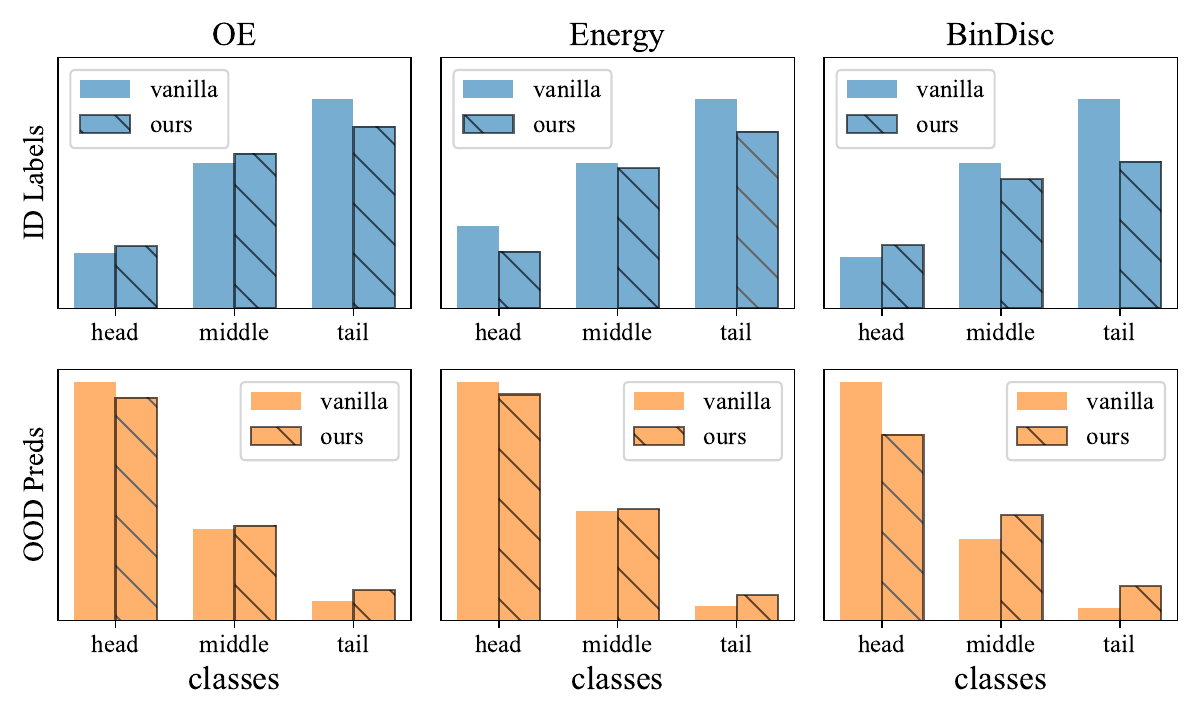}
  \caption{Class-aware \textit{error} statistics for different OOD detectors on CIFAR10-LT.}
  \label{fig:app_err_stat_method}
\end{figure}

% \subsection{Ablation on imbalance ratio}
% \label{app:exp_imbal_ratio}

\subsection{Detailed Results on CIFAR10/100-LT Benchmarks}
\label{app:exp_cifar_detail}

For CIFAR10/100-LT benchmark, Textures~\citep{cimpoi2014texture}, SVHN~\citep{netzer2011reading}, CIFAR100/10 (respectively), TinyImageNet~\citep{le2015tiny}, LSUN~\citep{yu2015lsun}, and Places365~\citep{zhou2017places} from SC-OOD dataset~\cite{yang2021semantically} are adopted as OOD test sets.
The mean results on the six OOD sets are reported in \cref{sec:exp}.

In this section, we reported the detailed measures on those OOD test sets in \cref{tab:abl_detail_cifar10} and \cref{tab:abl_detail_cifar100}, as the supplementary to \cref{tab:exp_cifar}.
The results indicate our method consistently outperforms the state-of-the-art PASCL~\citep{wang2022partial} on most of the subsets.

\begin{minipage}[htb]{\textwidth}
\begin{minipage}[t]{0.50\textwidth}
\makeatletter\def\@captype{table}
% \begin{table}[ht]
  \centering
  \caption{Detailed results on CIFAR10-LT.}
  \label{tab:abl_detail_cifar10}
  \scriptsize
  \setlength{\tabcolsep}{3pt}
  \begin{tabular}{c|c|ccc}
    \toprule
    $\bm{\mathcal{D}_{\text{out}}^{\text{test}}}$ & \textbf{Method} & \textbf{AUROC} ($\uparrow$) & \textbf{AUPR} ($\uparrow$) & \textbf{FPR95} ($\downarrow$)  \\
    \midrule
    \midrule
    \multirow{2}[1]{*}{Texture}
    & PASCL         & \mress{93.16}{0.37} & \mress{84.80}{1.50} & \mress{23.26}{0.91} \\
    & \textbf{Ours} & \mress{\textbf{96.58}}{0.21} & \mress{\textbf{94.09}}{0.29} & \mress{\textbf{16.22}}{0.47} \\
    \midrule
    \multirow{2}[1]{*}{SVHN}
    & PASCL         & \mress{96.63}{0.90} & \mress{98.06}{0.56} & \mress{12.18}{3.33} \\
    & \textbf{Ours} & \mress{\textbf{97.50}}{0.47} & \mress{\textbf{98.26}}{0.41} & \mress{\textbf{9.65}}{0.72} \\ 
    \midrule
    \multirow{2}[1]{*}{CIFAR100}
    & PASCL         & \mress{84.43}{0.23} & \mress{82.99}{0.48} & \mress{57.27}{0.88} \\
    & \textbf{Ours} & \mress{\textbf{86.61}}{0.19} & \mress{\textbf{85.71}}{0.12} & \mress{\textbf{55.51}}{0.41} \\ 
    \midrule
    \multirow{2}[1]{*}{TIN}
    & PASCL         & \mress{87.14}{0.18} & \mress{81.54}{0.38} & \mress{47.69}{0.59} \\
    & \textbf{Ours} & \mress{\textbf{88.75}}{0.35} & \mress{\textbf{86.27}}{0.39} & \mress{\textbf{40.52}}{0.32} \\
    \midrule
    \multirow{2}[1]{*}{LSUN}
    & PASCL         & \mress{93.17}{0.15} & \mress{91.76}{0.53} & \mress{26.40}{1.00} \\
    & \textbf{Ours} & \mress{\textbf{94.55}}{0.23} & \mress{\textbf{93.70}}{0.34} & \mress{\textbf{22.02}}{0.37} \\
    \midrule
    \multirow{2}[1]{*}{Places365}
    & PASCL         & \mress{91.43}{0.17} & \mress{96.28}{0.14} & \mress{33.40}{0.88} \\
    & \textbf{Ours} & \mress{\textbf{93.57}}{0.14} & \mress{\textbf{97.04}}{0.21} & \mress{\textbf{28.43}}{0.61} \\ 
    \midrule
    \multirow{2}[1]{*}{\textbf{Average}}
    & PASCL         & \mress{90.99}{0.19} & \mress{89.24}{0.34} & \mress{33.36}{0.79} \\
    & \textbf{Ours} & \mress{\textbf{92.93}}{0.26} & \mress{\textbf{92.51}}{0.29} & \mress{\textbf{28.73}}{0.48} \\
    \bottomrule
  \end{tabular} 
% \end{table}
\end{minipage}
\begin{minipage}[t]{0.50\textwidth}
\makeatletter\def\@captype{table}
% \begin{table}[ht]
  \centering
  \caption{Detailed results on CIFAR100-LT.}
  \label{tab:abl_detail_cifar100}
  \scriptsize
  \setlength{\tabcolsep}{4pt}
  \begin{tabular}{c|c|ccc}
    \toprule
    $\bm{\mathcal{D}_{\text{out}}^{\text{test}}}$ & \textbf{Method} & \textbf{AUROC} ($\uparrow$) & \textbf{AUPR} ($\uparrow$) & \textbf{FPR95} ($\downarrow$)  \\
    \midrule
    \midrule
    \multirow{2}[1]{*}{Texture}
    & PASCL         & \mress{76.01}{0.66} & \mress{58.12}{1.06} & \mress{\textbf{67.43}}{1.93} \\
    & \textbf{Ours} & \mress{\textbf{77.34}}{0.55} & \mress{\textbf{62.76}}{0.69} & \mress{68.87}{0.44} \\
    \midrule
    \multirow{2}[1]{*}{SVHN}
    & PASCL         & \mress{80.19}{2.19} & \mress{88.49}{1.59} & \mress{53.45}{3.60} \\
    & \textbf{Ours} & \mress{\textbf{84.20}}{0.34} & \mress{\textbf{91.86}}{0.50} & \mress{\textbf{47.50}}{0.32} \\
    \midrule
    \multirow{2}[1]{*}{CIFAR10}
    & PASCL         & \mress{\textbf{62.33}}{0.38} & \mress{\textbf{57.14}}{0.20} & \mress{79.55}{0.84} \\
    & \textbf{Ours} & \mress{61.53}{0.43} & \mress{56.56}{0.42} & \mress{\textbf{79.19}}{0.65} \\
    \midrule
    \multirow{2}[1]{*}{TIN}
    & PASCL         & \mress{68.20}{0.37} & \mress{51.53}{0.42} & \mress{76.11}{0.80} \\
    & \textbf{Ours} & \mress{\textbf{68.42}}{0.27} & \mress{\textbf{52.15}}{0.21} & \mress{\textbf{75.54}}{0.60} \\
    \midrule
    \multirow{2}[1]{*}{LSUN}
    & PASCL         & \mress{77.19}{0.44} & \mress{61.27}{0.72} & \mress{63.31}{0.87} \\
    & \textbf{Ours} & \mress{\textbf{77.68}}{0.29} & \mress{\textbf{61.66}}{0.38} & \mress{\textbf{60.32}}{0.32} \\
    \midrule
    \multirow{2}[1]{*}{Places365}
    & PASCL         & \mress{76.02}{0.21} & \mress{86.52}{0.29} & \mress{64.81}{0.27} \\
    & \textbf{Ours} & \mress{\textbf{76.19}}{0.13} & \mress{\textbf{86.79}}{0.20} & \mress{\textbf{62.48}}{0.45} \\
    \midrule
    \multirow{2}[1]{*}{\textbf{Average}}
    & PASCL         & \mress{73.32}{0.32} & \mress{67.18}{0.10} & \mress{67.44}{0.58} \\
    & \textbf{Ours} & \mress{\textbf{74.21}}{0.35} & \mress{\textbf{68.60}}{0.43} & \mress{\textbf{65.65}}{0.26} \\
    \bottomrule
  \end{tabular} 
% \end{table}
\end{minipage}
\end{minipage}

\subsection{Comparison on Different Imbalance Ratios}
\label{app:exp_imbalance_ratio}

In our manuscript, we mainly take the default imbalance ratio ($\rho=100$), which means the least frequent (tail) class only has 
$\frac{1}{100}$ of training samples than the most frequent (head) class). Here, we follow PASCL~\citep{wang2022partial} to investigate another imbalance ratio of $\rho=50$ (relatively more balanced than $\rho=100$) on CIFAR10-LT. According to \cref{tab:abl_imbalance_ratio}, our method consistently surpasses PASCL on various imbalance levels, and gains a larger enhancement (\eg, near 2.0\% of AUROC) on the more imbalanced scenario with $\rho=100$, which further demonstrates our effectiveness in mitigating imbalanced OOD detection.

\begin{table}[ht]
  \centering
  \caption{Evaluation on different imbalance ratio $\rho$ for the CIFAR10-LT benchmark.}
  \label{tab:abl_imbalance_ratio}
  \vskip 0.15in
  \renewcommand\arraystretch{1.2}
  \small
  \begin{tabular}{lcccccc}
    \toprule
    \multirow{2}[1]{*}{Method} & \multicolumn{3}{c}{$\rho=100$}  & \multicolumn{3}{c}{$\rho=50$} \\
    \cmidrule(lr){2-4} \cmidrule(lr){5-7}
    & AUROC↑ & AUPR↑ & FPR95↓ & AUROC↑ & AUPR↑ & FPR95↓   \\
    \midrule
    OE           & 89.77 & 87.25 & 34.65 & 93.13 & 91.06 & 24.73 \\
    PASCL        & 90.99 & 89.24 & 33.36 & 93.94 & 92.79 & 22.08 \\
    \textbf{Ours} & \textbf{92.93} & \textbf{92.51} & \textbf{28.73} & \textbf{94.37} & \textbf{94.24} & \textbf{19.72}  \\
    \bottomrule
  \end{tabular} 
\end{table}

\subsection{Investigation on SOTA General OOD Detection Methods}
\label{app:exp_sota_ood}

Besides the method specifically designed for imbalanced OOD detection, we also compare with several recently published detectors that aims at general (or, balanced) OOD detection:

\begin{itemize}[partopsep=2pt,itemsep=3pt,topsep=0pt,parsep=0pt,leftmargin=20pt]
    \item NECO~\cite{ammar2024neco} studies the neural collapse phenomenon develops a novel post-hoc method to leverage geometric properties and principal component spaces to identify OOD data.
    \item fDBD~\cite{liu2024fast} investigates model's decision boundaries and proposes to detect OOD using the feature distance to decision boundaries.
    \item IDLabel~\cite{du2024and} theoretically delineates the impact of ID labels on OOD detection, and utilizes ID labels to enhance OOD detection via representation characterizations through spectral decomposition on the graph.
    \item ID-like~\cite{bai2024id} leverages the powerful vision-language model CLIP to identify challenging OOD samples/categories from the vicinity space of the ID classes to further facilitate OOD detection.
\end{itemize}

The experimental results are displayed in \cref{tab:abl_sota_ood}. On the CIFAR10-LT benchmark, our method surpasses the recent OOD detectors by a large margin. For imbalanced data distribution, IDLabel~\cite{du2024and} struggles to capture the distributional difference across ID classes, while the pure post-hoc methods NECO~\cite{ammar2024neco} and fDBD~\cite{liu2024fast} even get worse results because they fail to generalize to the scenario with highly skewed feature spaces and decision boundaries. On the ImageNet-LT benchmark, even the incorporation of the powerful CLIP model cannot well address the imbalance problem for ID-like~\cite{bai2024id}, while our method can specifically and effectively facilitate imbalanced OOD detection with the help of our theoretical groundness.

\begin{table}[ht]
  \centering
  \caption{Comparison with SOTA general OOD detectors.}
  \label{tab:abl_sota_ood}
  \vskip 0.15in
  \renewcommand\arraystretch{1.2}
  \small
  \begin{tabular}{l|cc|ccc}
    \toprule
    Benchmark & Method & Pub\&Year & AUROC↑ & AUPR↑ & FPR95↓ \\
    \midrule
    \multirow{4}[2]{*}{CIFAR10-LT}
    & NECO    & ICLR'24 & 85.15 & 82.39 & 40.44 \\
    & fDBD    & ICML'24 & 87.90 & 83.07 & 41.98 \\
    & IDLabel & ICML'24 & 90.06 & 88.29 & 34.66 \\
    & \textbf{Ours} & - & \textbf{93.55} & \textbf{92.83} & \textbf{28.52} \\
    \midrule
    \multirow{2}[1]{*}{ImageNet-LT}
    & ID-like & CVPR'24 & 72.05 & 71.37 & 78.36 \\
    & \textbf{Ours} & - & \textbf{75.84} & \textbf{73.19} & \textbf{74.96} \\
    \bottomrule
  \end{tabular} 
\end{table}

\subsection{Additional Analysis on Correctly-detected ID and OOD Samples}
\label{app:sec_model_stat}

In \cref{fig:intro_stat}, we reveal that the class labels for \textit{wrongly}-identified ID samples and the class predictions for \textit{wrongly}-detected OOD samples are both \textit{sensitive} the to the ID class distribution prior. Here, \cref{fig:app_correct_stat} indicates the \textit{correctly}-detected ID and OOD samples are \textit{insensitive} to the class distribution.

\begin{figure}[ht]
  \centering
  \includegraphics[width=0.8\linewidth]{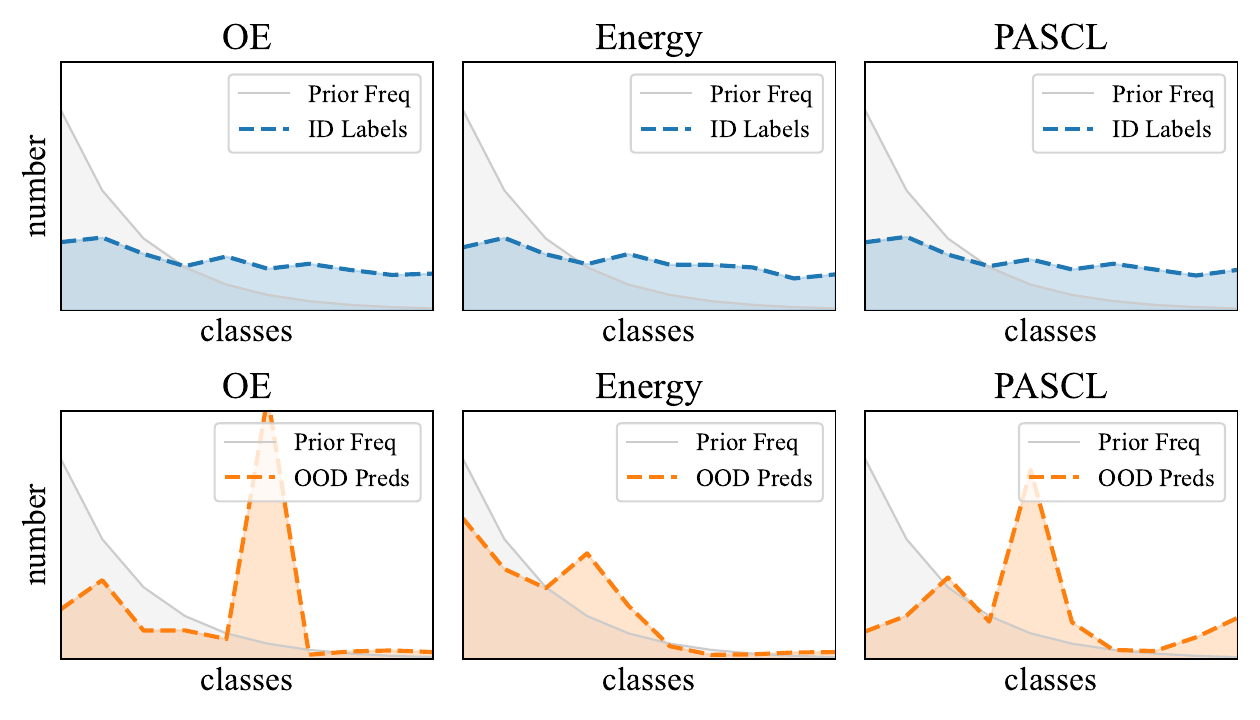}
  \caption{Statistics on \textit{correctly-detected} ID and OOD samples.}
  \label{fig:app_correct_stat}
\end{figure}

In particular, the \textit{per-class prediction quantity} of correctly-detected OOD samples in \cref{fig:app_correct_stat} may not precisely describe the statistical distribution, as the maximum \textit{ID-class prediction probability} is relatively low (\eg, $max_y{P(y|x,i)} = 0.12$ for 10-category classification) and introduce extra noise. Therefore, we supplement the statistics of \textit{per-class prediction probability} in \cref{tab:abl_correct_stat}, and all of the distributions for OE, Energy, and PASCL are nearly even.

\begin{table}[ht]
  \centering
  \caption{Average per-class prediction probability for correctly-detected OOD samples.}
  \label{tab:abl_correct_stat}
  \vskip 0.15in
  \renewcommand\arraystretch{1.2}
  \small
  \begin{tabular}{l|cccccccccc}
    \toprule
    Method & $cls_1$ & $cls_2$ & $cls_3$ & $cls_4$ & $cls_5$ & $cls_6$ & $cls_7$ & $cls_8$ & $cls_9$ & $cls_{10}$ \\
    \midrule
    OE     & 0.12 & 0.11 & 0.14 & 0.14 & 0.13 & 0.11 & 0.16 & 0.11 & 0.11 & 0.12 \\
    Energy & 0.28 & 0.25 & 0.31 & 0.26 & 0.29 & 0.29 & 0.39 & 0.32 & 0.36 & 0.33 \\
    PASCL  & 0.15 & 0.12 & 0.12 & 0.14 & 0.11 & 0.12 & 0.13 & 0.12 & 0.11 & 0.11 \\
    \bottomrule
  \end{tabular} 
\end{table}

%%%%%%%%%%%%%%%%%%%%%%%%%%%%%%%%%%%%%%%%%%%%%%%%%%%%%%%%%%%%

% \input{content/x_checklist}

\end{document}